\def\eqref#1{Eq.~(\ref{#1})}
\def\1{\bm{1}}
\def\vs{{\bm{s}}}
\DeclareMathAlphabet{\mathsfit}{\encodingdefault}{\sfdefault}{m}{sl}
\SetMathAlphabet{\mathsfit}{bold}{\encodingdefault}{\sfdefault}{bx}{n}
\def\gA{{\mathcal{A}}}
\def\gD{{\mathcal{D}}}
\def\gG{{\mathcal{G}}}
\def\gL{{\mathcal{L}}}
\def\gP{{\mathcal{P}}}
\def\gV{{\mathcal{V}}}
\def\gX{{\mathcal{X}}}
\def\gY{{\mathcal{Y}}}
\def\gZ{{\mathcal{Z}}}
\def\sR{{\mathbb{R}}}
\newcommand{\E}{\mathbb{E}}
\def\ie{\textit{i.e.,~}}
\def\eg{\textit{e.g.,~}}
\def\etc{\textit{etc.~}}
\def\wrt{\textit{w.r.t.~}}
\def\vs{\textit{v.s.~}}
\def\aka{\textit{a.k.a.~}}
\tikzset{
  bold arrow/.style={
    ->,
    line width=0.8pt,
    >=latex,
   every node/.style={draw=thick, circle, align=center, inner sep=2pt, line width=5pt},
    postaction={draw, line width=0.8pt, -{Stealth[length=6pt, width=5pt]}, fill=gray!20}
  }
}
\definecolor{mine}{RGB}{205, 232, 248}%
\definecolor{minedark}{RGB}{160, 190, 210}%
\definecolor{revision}{RGB}{210, 22, 123}
\newcommand{\green}[1]{{\color{brown}#1}}
\newtheorem{theorem}{Theorem}
\newtheorem{lemma}{Lemma}
\newtheorem{definition}{Definition}
\newtheorem{proposition}{Proposition}
\title{
Understanding the Role of Equivariance in Self-supervised Learning
}
\author{
 Yifei Wang\thanks{Equal Contribution. Correspondence: Yifei Wang <yifei\_w@mit.edu>.} \\ 
  MIT 
  \and
  \textbf{Kaiwen Hu}\textsuperscript{*} \\
  Peking University \and
  \textbf{Sharut Gupta} \\
  MIT \\
  \and 
  \textbf{Ziyu Ye} \\
  The University of Chicago \\
  \AND
  \textbf{Yisen Wang} \\ 
  Peking University \\ \and
\textbf{Stefanie Jegelka} \\ 
  TUM\thanks{CIT, MCML, MDSI}~~and~MIT\thanks{EECS, CSAIL}
}
\begin{document}

\maketitle

\begin{abstract}
Contrastive learning has been a leading paradigm for self-supervised learning, but it is widely observed that it comes at the price of sacrificing useful features (\eg colors) by being invariant to data augmentations. Given this limitation, there has been a surge of interest in equivariant self-supervised learning (E-SSL) that learns features to be augmentation-aware. However, even for the simplest rotation prediction method, there is a lack of rigorous understanding of why, when, and how E-SSL learns useful features for downstream tasks. To bridge this gap between practice and theory, we establish an information-theoretic perspective to understand the generalization ability of E-SSL. In particular, we identify a critical explaining-away effect in E-SSL that creates a synergy between the equivariant and classification tasks. This synergy effect encourages models to extract class-relevant features to improve its equivariant prediction, which, in turn, benefits downstream tasks requiring semantic features. Based on this perspective, we theoretically analyze the influence of data transformations and reveal several principles for practical designs of E-SSL. Our theory not only aligns well with existing E-SSL methods but also sheds light on new directions by exploring the benefits of model equivariance. We believe that a theoretically grounded understanding on the role of equivariance would inspire more principled and advanced designs in this field. Code is available at \url{https://github.com/kaotty/Understanding-ESSL}.
\end{abstract}

\section{Introduction}

Self-supervised learning (SSL) of data representations has made remarkable progress. Existing SSL methods can be categorized into two types: invariant SSL (I-SSL) and equivariant SSL (E-SSL). The idea of I-SSL is to encourage the representation to be invariant to input augmentations (\eg color jittering). Contrastive learning that pulls positive samples closer and pushes negative samples apart is widely believed to be a prominent I-SSL paradigm, leading to rapid progress in recent years~\citep{simclr,pirl,bardes2021vicreg,ibot,larsson2016learning,gidaris2018unsupervised,bachman2019learning,gidaris2021obow,grill2020bootstrap,shwartz2022pre,misra2020self,chen2020big,he2020momentum,chen2021exploring,zbontar2021barlow,tomasev2022pushing,zhou2022mugs}. Nevertheless, since invariant representations lose augmentation-related information (\eg color information), their performance on downstream tasks can be hindered, as frequently observed in practice~\cite{lee2021improving,iclr-essl,gupta2023structuring}. 

In view of these limitations of I-SSL, there has been a growing interest in revisiting E-SSL. Contrary to I-SSL, E-SSL learns representations that are sensitive to (or aware of) the applied transformation.\footnote{A rigorous definition of feature equivariance requires more restrictions; for example, the transformations must be invertible. In existing SSL literature, equivariance is (loosely) referred to as the property that features are aware of general input transformations (not necessarily invertible). We follow this convention in this paper.
} 
For instance, RotNet~\citep{rotation} is an early exemplar of E-SSL that learns discriminative features by predicting the rotation angles from randomly rotated images~\cite{kolesnikov2019revisiting}. 
It has also been exploited in recent works and achieves promising improvements in conjunction with I-SSL~\cite{xiao2020should,wang2021residual,iclr-essl,devillers2022equimod,garrido2023self,park2022learning,gupta2023structuring}. Recently, E-SSL has shown potential for serving as the foundation for building visual world models~\cite{garrido2024learning}. 

Despite this intriguing progress in practice, compared to invariant SSL methods with a vast literature of theoretical analyses~\cite{arora,wang2020understanding,jason,haochen2021provable,wang2022chaos,saunshi2022understanding,zhang2023on}, there is little theoretical understanding of equivariant SSL methods. A particular difficulty lies in the understanding of the pretraining tasks, which may seem quite irrelevant to downstream classification. Taking RotNet as an example, the random rotation angle is \emph{independent} of the image class, so it is unclear how rotation-equivariant representations are helpful for image classification. Generally speaking, it is unclear \textbf{why, when, and how equivariant representations can generalize to downstream tasks}.

In view of this situation, the primary goal of this paper is not to design a new E-SSL variant, but to revisit the basic E-SSL methods and \emph{understand} their essential working mechanisms.
We fulfil this goal by proposing a simple yet theoretically grounded explanation for understanding general E-SSL from an information-theoretic perspective. We show that the effectiveness of E-SSL can be understood via the ``explaining-away'' effect in statistics, which implies an intriguing \emph{synergy effect} between the image class $C$ and the equivariant transformation $A$ 
 (\eg rotation) such that almost surely, they have strictly positive mutual information \emph{when given the input $X$}, \ie $I(C;A|X)>0$ that explains the effectiveness of E-SSL.
This understanding also provides valuable guidelines for practical E-SSL design with three principles to pursue a large synergy effect $I(C;A|X)$: lossy transformations, class relevance, and shortcut pruning, as been validated on practical datasets. Theoretically, we also quantitatively analyze the influence of data transformation on the synergy effect with a theory model. 

Equipped with these theoretical findings, we further revisit advanced E-SSL methods in the recent literature~\cite{xiao2020should,wang2021residual,iclr-essl,devillers2022equimod,garrido2023self,park2022learning,gupta2023structuring} and find that many of these empirical successes can be well explained in our framework from two aspects: finer equivariance and multivariate equivariance. Besides, motivated by our theory, we also discover an under-explored aspect of E-SSL, model equivariance, where we show that adopting equivariant neural networks can yield strong  improvements for certain E-SSL methods. These fruitful theoretical and practical merits suggest that our E-SSL theory provides a general and practically useful explanation for understanding and designing E-SSL methods that have the potential to guide future E-SSL designs. 

\section{Related Work}
\label{sec:related-work}

\textbf{Invariant and Equivariant SSL.} Without access to labels, SSL methods design various surrogate tasks that create self-supervision for representation learning. Early SSL methods, often in the form of predictive learning, learn from predicting the transformation of randomly transformed images, such as, RotNet~\cite{rotation}, Jigsaw~\cite{jigsaw}, Relative Patch Location~\cite{relative}. Later, discriminating instances in the latent space with contrastive learning demonstrates prominent performance~\cite{dosovitskiy2014discriminative,wu2018unsupervised,InfoNCE,infomax,simclr,moco,clip}, with variants including non-contrastive methods~\cite{grill2020bootstrap}, clustering methods~\cite{caron2020unsupervised,swav,dino}, regularization methods~\cite{zbontar2021barlow}. 
However, data augmentations used in contrastive learning to avoid shortcuts often come at the cost of information lost for downstream tasks (\eg color for flower classification). To address this issue, there is a surge of interest in E-SSL that learns features to be sensitive to the applied transformations. Among them, \citet{xiao2020should} use separate embeddings for each augmentation. \citet{wang2021residual} apply equivariant prediction on residual vectors between positive views. \citet{iclr-essl} combine contrastive learning and rotation prediction. \citet{devillers2022equimod,garrido2023SIE} utilize conditional predictors with augmentation parameters. \citet{park2022learning,gupta2023structuring} model latent equivariant transformations explicitly.

\textbf{Theory of SSL.} Most existing theories of SSL methods focus on contrastive learning (CL) and its variants from different perspectives: information maximization~\cite{InfoNCE,infomax,tschannen2019mutual}, downstream generalization~\cite{arora,wang2020understanding,jason,haochen2021provable,wang2022chaos,saunshi2022understanding}, feature dynamics~\cite{wang2020understanding,wang2023message}, asymmetric designs~\cite{tian2021understanding,zhuo2023towards},  feature identifiability \cite{zhang2023tricontrastive}, \etc But for general E-SSL methods, there is little, if any, theoretical understanding on how they learn meaningful features for downstream tasks (in particular, image classification). Our work fills this gap by establishing a general information-theoretic framework for understanding E-SSL.

\textbf{Equivariance in Deep Learning.}
Invariance and equivariance represent data symmetries that can be exploited during learning. There are two approaches to utilize invariance and equivariance. One is \emph{equivariant learning} (to which E-SSL belongs) that uses equivariant training regularization such that features are \emph{approximately} equivariant; the other is equivariant models that obey \emph{exact} equivariance by design \wrt groups like rotation and  scaling~\cite{cohen2016group,gerken2023geometric,bronstein2021geometric}.
Equivariant models find wide applications in graph, manifold, and molecular domains~\cite{bronstein2021geometric,gerken2023geometric}, but are rarely explored for equivariant SSL. In this work, we find that model equivariance can be particularly helpful for equivariant learning in terms of both training and generalization, which opens an interesting direction to explore on the interplay between equivariant learning and equivariant models for future research.

\section{The Challenges of Understanding Equivariant SSL}
\label{sec:background}

\textbf{Notations.}
We introduce existing SSL methods from a probabilistic perspective. Generally, we denote a random variable by a capital letter such as $X$, its sample space as $\mathcal{X}$, and its outcome as $x$. We learn a representation $\left(Z / \mathcal{Z} / z_x\right)$ from the input ($X/\gX/x$) through a deterministic encoder function $F\colon \gX \to\gZ$. The general goal of SSL is to learn discriminative representations that are predictive of the image classes (labels) without actual access to label information. For ease of discussion, we mainly adopt the common Shannon information, where the entropy of $X$ is $H(X)=-\E_{P(X)}\log P(X)$ and the mutual information between $X$  and $Y$ is $I(X;Y)=H(X)-H(X|Y)$. It is also tempting to adopt $\gV$-information~\citep{v-information} that is analogous to Shannon's notion but aligns better with practice by taking into account computational constraints. For ease of understanding, we adopt Shannon's information in the main paper and extend the main results to $\gV$-information in Appendix~\ref{app:v-information}. 

\textbf{Equivariant SSL (E-SSL).}
For each raw input $\bar  X$ sampled from the training set $\gD$, we independently draw a random augmentation $A$ and get the augmented sample $X=T(\bar X, A)$ with a transformation mapping $T\colon\bar{\gX}\times\gA\to\gX$. The general objective of Equivariant SSL (E-SSL) is to learn representations $Z=F(X)$ that are sensitive to the applied transformation $A$. 
For example, RotNet~\citep{rotation} utilizes random four-fold rotation $\gA=\{0^\circ,90^\circ,180^\circ,270^\circ\}$ for data augmentation, and learns feature equivariance by predicting the rotation angles from the representation $Z$. Therefore, E-SSL is driven by maximizing the following mutual information between the augmentation $A$ and the representation $Z$: 
\begin{align}
    \max\ I(A; Z).
\end{align}
Note again that equivariance studied in this paper, as in many E-SSL works \cite{iclr-essl,devillers2022equimod,garrido2023SIE,gupta2023structuring}, is a relaxed notion of exact equivariance defined in a group-theoretical sense (with invertibility and compositionality) as in equivariant networks \cite{cohen2016group}. In E-SSL, equivariance generally means augmentation sensitivity, and the mutual information $I(A;Z)$ measures the degree of equivariance of $Z$ to $A$. 

\textbf{Contrastive Learning as E-SSL.} Contrary to E-SSL, I-SSL enforces features to be invariant to the applied augmentation $A$.  CL is widely believed to be an example of invariant learning~\citep{iclr-essl}. In CL, we apply two random data augmentations, $ A_1, A_2$ to the same input $\bar  X$ and get two positive samples $X_1, X_2$ and their representations $Z_1,Z_2$ respectively. Since CL is driven by pulling $Z_1,Z_2$ together, their mutual information objective is often formalized as $\max_{Z_1=F(\bar X,A_1),Z_2=F(\bar X,Z_2)} I(Z_1;Z_2)$ ~\citep{aitchison2024infonce}. However, it is easy to observe that the constant outputs $Z=const$ are also optimal with maximal $I(Z_1;Z_2)$, suggesting that invariance alone is sufficient for SSL.
In fact, contrastive learning can mitigate feature collapse with the help of pushing away from the representation of the other instances (\ie negative samples), making it essentially \textbf{an equivariant learning task \wrt the instance, known as instance discrimination}~\cite{dosovitskiy2014discriminative,wu2018unsupervised}. Indeed, contrastive objectives are essentially non-parametric formulations of instance classification~\cite{wu2018unsupervised}, and under similar designs, parametric instance classification achieves similar performance~\cite{cao2020parametric}. Non-contrastive variants with only positive samples are also shown to have inherent connection to contrastive methods in recent studies~\cite{tian2021understanding,zhuo2023towards,garrido2023on}.

\textbf{Equivariance is \emph{Not} All You Need.} 
A common intuition among E-SSL methods is that better downstream performance comes from better feature equivariance~\citep{devillers2022equimod,garrido2023self,park2022learning,gupta2023structuring}. Here, we begin our discussion by showing a counterexample in the following proposition. All proofs in this paper can be found in Appendix~\ref{app:proofs}.

\begin{proposition}[Useless equivariance]
Assume that the original input $\bar X\in\sR^{d}$ and the augmentation $A\in\sR^{d'}$ are independent, and $X=[\bar X, A]\in\sR^{d+d'}$ is obtained with direct concatenation (DC). Then, there exists a simple linear encoder that has perfect equivariance to $A$, but yields random guessing on downstream classification. 
\label{prop:uselss-equivariance}
\end{proposition}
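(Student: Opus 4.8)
The plan is to exhibit the encoder explicitly and then verify the two required properties, each by a one-line information-theoretic argument. Since $X=[\bar X,A]$, the natural candidate is the linear map that discards the raw block and keeps the augmentation coordinates: take $F(x)=\mW x$ with $\mW=[\,\vzero_{d'\times d}\ \ \mI_{d'}\,]\in\sR^{d'\times(d+d')}$, so that $Z=F(X)=A$ identically. This is about as simple a linear encoder as one can write down, and it clearly lands in $\gZ=\sR^{d'}$.

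First I would check \emph{perfect equivariance}. Because $Z=A$ deterministically, $A$ is recovered from $Z$ by the identity map with zero error, and the E-SSL objective $I(A;Z)=I(A;A)=H(A)$ (its supremum, $+\infty$, in the continuous case) is maximized; hence $Z$ is perfectly equivariant to $A$ in the sense used throughout the paper. Second I would establish the \emph{downstream failure}. The structural point is that the label $C$ of the augmented sample $X$ is inherited from the raw input $\bar X$: it depends on $\bar X$ only and is unaffected by the independently drawn augmentation $A$. Combined with the hypothesis $\bar X\perp A$, this gives $C\perp A$, hence $C\perp Z$ and $I(C;Z)=0$. Consequently, for \emph{any} downstream head $g$ (deterministic or randomized) that uses only $Z$, the prediction $g(Z)$ is independent of $C$, so its accuracy obeys $P\big(g(Z)=C\big)=\sum_c P\big(g(Z)=c\big)\,P(C=c)\le\max_c P(C=c)$, which is exactly the accuracy of the trivial ``always predict the most likely class'' rule, i.e. random guessing from the class prior. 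This proves the proposition.

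I do not expect a genuine obstacle here: the statement is a deliberate sanity-check counterexample to the intuition ``more equivariance $\Rightarrow$ better downstream features.'' The only two points that need a little care are (i) making explicit that the generative model ``$X=T(\bar X,A)$ with $A$ drawn independently of $\bar X$'' forces $C$ to depend on $\bar X$ alone and to be independent of $A$, which is what lets independence of $(\bar X,A)$ transfer to independence of $(C,Z)$; and (ii) phrasing ``random guessing'' so that it also rules out randomized classifiers, for which the inequality $P(g(Z)=C)\le\max_c P(C=c)$ under $C\perp Z$ is precisely the needed statement. A cosmetic variant would replace $\mW$ by any linear map whose range lies inside the augmentation block; every such encoder is perfectly equivariant and downstream-useless for the same reason.
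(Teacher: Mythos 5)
Your proposal is correct and uses essentially the same construction as the paper: the linear encoder that projects onto the last $d'$ coordinates, giving $Z=A$ exactly, hence perfect equivariance, and no dependence on class information, hence chance-level downstream performance. Your write-up merely spells out in more detail the independence chain $C\perp A \Rightarrow C\perp Z$ and the formal meaning of ``random guessing,'' which the paper's one-line proof leaves implicit.
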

Proposition~\ref{prop:uselss-equivariance} shows an extreme case when perfect equivariance is unhelpful for feature learning at all.
Inspired by this finding, we further examine common image transformations for E-SSL: horizontal flip, grayscale, four-fold rotation, vertical flip, jigsaw, four-fold blur and color inversion (details in Appendix~\ref{app:details}). Prior to ours, \citet{iclr-essl} show that when merged with contrastive learning, additional equivariance to some augmentations (\eg four-fold rotation and vertical flip) can bring benefits while some are even harmful (\eg horizontal flip). It is not fully explored how much \emph{E-SSL alone} depends on the chosen augmentation. Below, we study seven common transformations for E-SSL on CIFAR-10 \cite{cifar} with ResNet-18: horizontal flip, four-fold rotation, grayscale, vertical flip, jigsaw, four-fold blur, and color inversion.
We only apply random crops to avoid cross-interactions between different transformations. So the numbers reflect the relative strengths of different methods, instead of their optimal performance that can be attained. 

Figure~\ref{fig:transformations} reveals big differences between different choices of transformations: with linear probing, four-fold rotation and vertical flip perform the best and attain more than 60\% accuracy, while the others do not even attain significant gains over random initialization (34\%). This distinction cannot be simply understood via \textbf{feature usefulness}, since color information imposed by learning grayscale and color inversion is known to be important for classification~\cite{xie2022should}. Meanwhile, in Figure~\ref{fig:transformations-train}, we find that the \textbf{degree of equivariance} (measured by the training loss of E-SSL) does not explain the difference either, since among ineffective ones, some with large training loss have very low equivariance (\eg horizontal flip), while some have very high equivariance with nearly zero equivariant loss (\eg grayscale). These phenomena show that equivariance alone, either strong or weak,
does not have a good or bad indication of downstream performance, which motivates us to provide a more general understanding of E-SSL.

\begin{figure}
\centering
\begin{subfigure}{.35\textwidth}
\includegraphics[width=\linewidth]{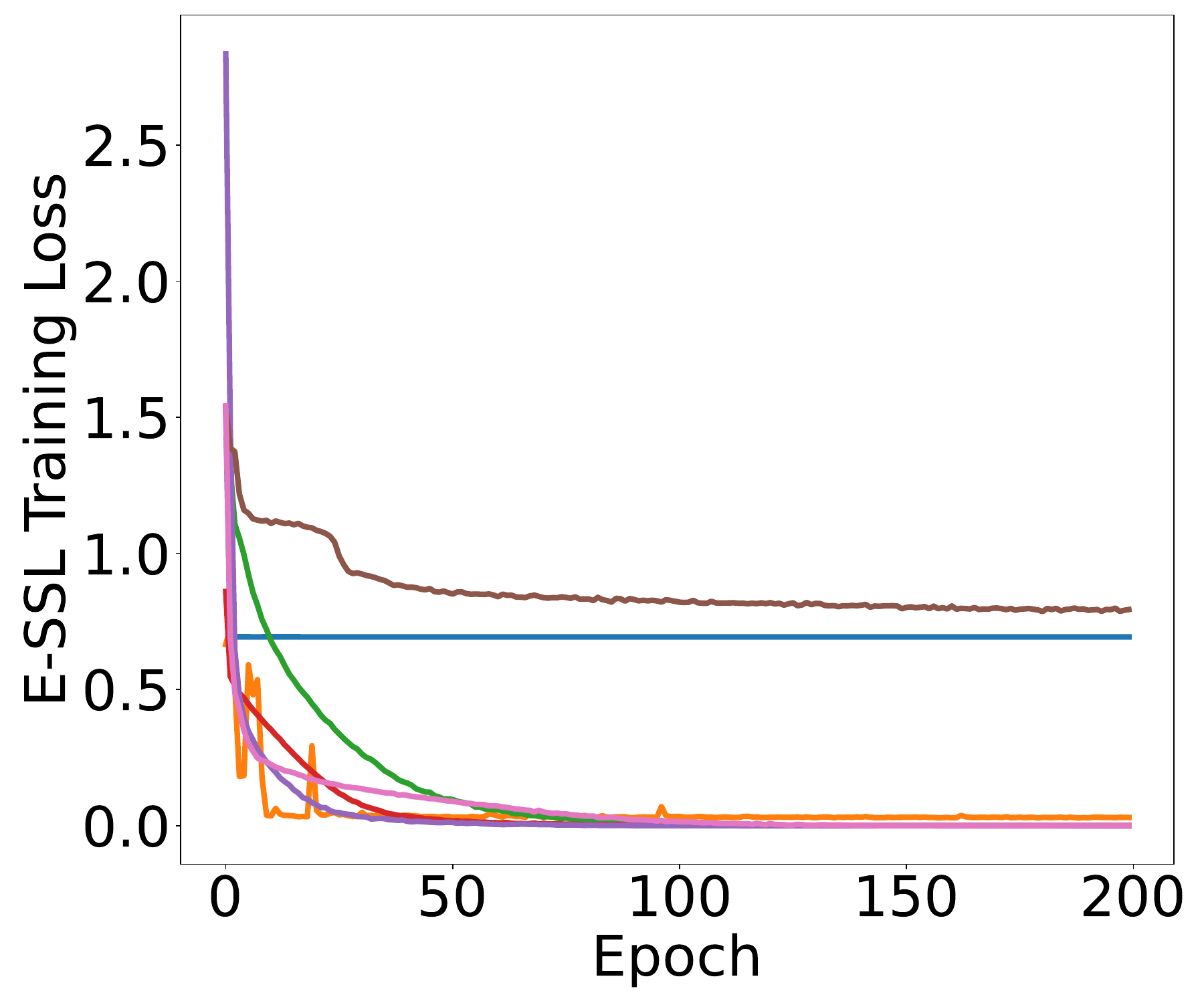}
\caption{E-SSL Training Loss} \label{fig:transformations-train}
\end{subfigure}\quad\quad
\begin{subfigure}{.58\textwidth}
\includegraphics[width=\linewidth]{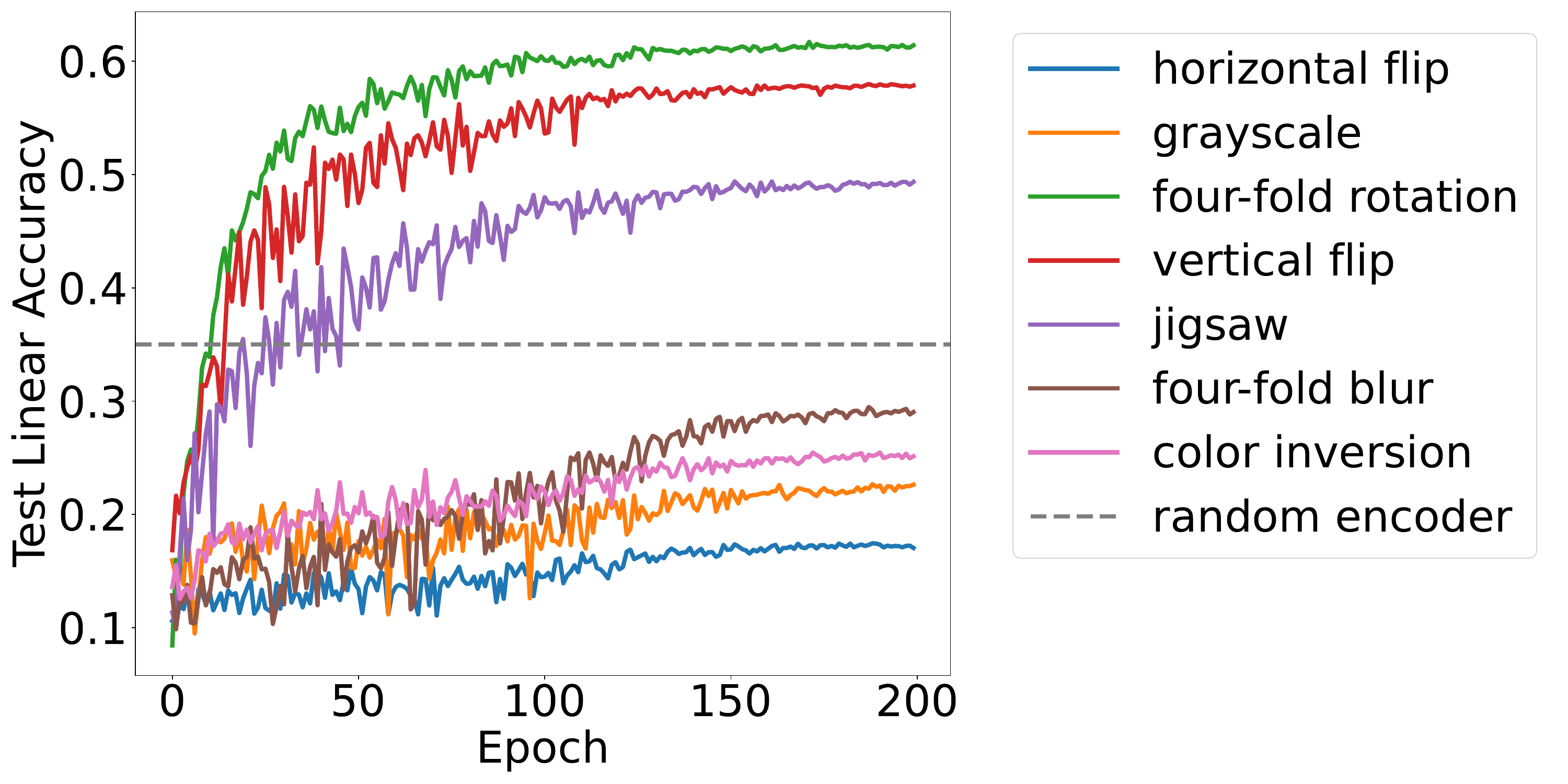}
\caption{Test Linear Accuracy\quad\quad\quad\quad\quad\quad\quad\quad
}\label{fig:transformations-test}
\end{subfigure}
\caption{Comparison between different transformations for E-SSL on CIFAR-10 with ResNet-18. Note that different pretraining tasks may have different classes (\eg $4$ for rotation and $2$ for horizontal flip). The baseline is a random initialized encoder with 34\% test accuracy under linear probing.}
\label{fig:transformations}
\end{figure}

\section{A Theory of Equivariant SSL}
\label{sec:theory}
In Section~\ref{sec:background}, we have shown that feature equivariance alone does not guarantee effective downstream performance, which makes it even unclear how equivariant learning extracts useful features. To resolve these puzzles, we provide an information-theoretic analysis for E-SSL that serves as a natural explanation for the phenomena above.

\subsection{Explaining E-SSL via Explaining-away}
\label{sec:explaining-away}
  
We start by establishing a causal diagram of the data generation process of E-SSL, where we assume that the original input $\bar X$ is generated from its class variable $C$ (relevant to input semantics, \eg shape), intrinsic equivariance variable $\bar{A}$ (relevant to semantics, \eg the intrinsic orientation of an object) and style variable $S$ (features irrelevant to semantics and targeted equivariance, \eg color and texture) through some unknown processes. Then, we apply an independently drawn augmentation variable $A$ (\eg a random rotation angle), and get the transformed input $X$. Afterwards, we obtain its representation $Z$ through a neural network.

\textbf{Collider structure in E-SSL.}
The causal diagram shows that the class variable $C$ and the augmentation variable $A$ are independent. However, there exists a so-called \emph{collider} structure where the augmented sample $X$ is a common child of $C$ and $A$. A well-known fact from statistics called the \emph{explaining-away} effect (\aka selection bias)~\citep{causality,koller2009pgm} says that in a collider block, when conditioning on the collider $X$ or its descendent like $Z$, the parents $C$ and $A$ are no longer independent. For example, the weather $(A)$ and the road condition $(C)$ are independent factors that can contribute to car accidents $(X)$. 
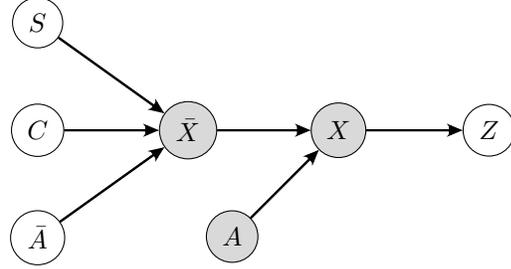
\begin{wrapfigure}{r}{.5\linewidth}
  \centering
  \begin{tikzpicture}[node distance=2cm, every node/.style={draw, circle, align=center}]
    \node (C) {$C$};
    \node[ right of=C, node distance=2cm, fill=gray!30] (barX) {$\bar{X}$};
    \node[ above of=C, node distance=1.43cm,] (S) {$S$};  
    \node[ below of=C, node distance=1.43cm,] (barA) {$\bar{A}$}; 
    \node[ right of=barX, node distance=2cm, fill=gray!30] (X) {$X$};
    \node[below left of=X, node distance=2cm, fill=gray!30] (A) {$A$};  
    \node[right of=X, node distance=2cm] (Z) {$Z$};
  
    \draw[bold arrow] (C) -- (barX);
    \draw[bold arrow] (S) -- (barX);
    \draw[bold arrow] (barA) -- (barX);
    \draw[bold arrow] (barX) -- (X);
    \draw[bold arrow] (A) -- (X);
    \draw[bold arrow] (X) -- (Z);
  \end{tikzpicture}
  \caption{The causal diagram of equivariant self-supervised learning. The observed variables are in grey. $C$: class; $S$: style; $\bar{A}$: intrinsic equivariance variable; $\bar X$: raw input; $A$: augmentation; $X$: augmented input; $Z$: representation.}
  \label{fig:causal}
\end{wrapfigure}
However, given that an accident happens ($X$ is known), if we know that it rains today, it would be less likely that the road is broken, and vice versa. In this case, we say that the weather $A$ explains away the possibility of road conditions $C$. The theorem below formally characterises the explain-effect effect in the E-SSL process and its information-theoretic implication. A caveat is that Lemma~\ref{thm:explaining-away} guarantees that explaining-away happens in most, but not all cases (\eg Proposition~\ref{prop:uselss-equivariance}), and we explain these exceptions in Section~\ref{sec:principles}.

\begin{lemma}[Explaining-away in E-SSL]
If the data generation process obeys the diagram in Figure~\ref{fig:causal}, then almost surely, $A$ and $C$ are not independent given $X$ or $Z$, \ie $A \not\!\perp\!\!\!\perp C | X$ and $A \not\!\perp\!\!\!\perp C | Z$. It implies that $I(A;C|X)>0$ and $I(A;C|Z)>0$
hold almost surely.
\label{thm:explaining-away}
\end{lemma}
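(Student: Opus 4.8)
The plan is to show that conditioning on the collider $X$ (and hence on its descendant $Z$) generically destroys the independence of $C$ and $A$, and then translate this into a strictly positive conditional mutual information. First I would set up the conditional joint law: since $A$ is drawn independently of $(\bar X, S, \bar A, C)$, the prior $P(C=c, A=a) = P(C=c)P(A=a)$ factorizes, but after observing $X=x$ we have, by Bayes' rule,
\begin{equation}
P(C=c, A=a \mid X=x) \;=\; \frac{P(X=x \mid C=c, A=a)\,P(C=c)\,P(A=a)}{P(X=x)}.
\end{equation}
Here $P(X=x \mid C=c, A=a) = \E_{\bar A, S}\big[\,P\big(X=x \mid \bar X(c,\bar A,S), a\big)\,\big]$, integrating out the latent nuisances $\bar A, S$ conditioned on $C=c$. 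The key structural fact is that this kernel does not in general factor as a product of a function of $(x,c)$ times a function of $(x,a)$; the augmentation $T(\bar X, A)$ couples the class-dependent raw input with the augmentation in an entangled way. So conditional independence $A \perp C \mid X$ would require the very special multiplicative (rank-one) structure $P(X=x\mid C=c,A=a) = g(x,c)h(x,a)$ for $P_X$-almost every $x$ — equivalently, that the matrix $[\,P(X=x\mid C=c,A=a)\,]_{c,a}$ has rank one for a.e. $x$. This is a measure-zero/non-generic condition, which is exactly why the statement is "almost surely" rather than "always," and why Proposition~\ref{prop:uselss-equivariance} (direct concatenation) is the exceptional case where the kernel \emph{does} factor.

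The core argument I would carry out: fix the data-generating maps and argue that, unless the degenerate factorization above holds, there is a positive-probability set of $x$ for which $P(C,A\mid X=x) \neq P(C\mid X=x)P(A\mid X=x)$, i.e. $I(C;A\mid X=x) > 0$. Then $I(C;A\mid X) = \E_X[\,I(C;A\mid X=X)\,] > 0$ since a nonnegative quantity with positive-probability strictly positive values has positive expectation. For the $Z$ version, note $Z = F(X)$ is a deterministic function of $X$, so $C - X - Z$ is not quite the right chain; instead I would use that $(C,A) - X - Z$ forms a Markov chain (both $C$ and $A$ influence $Z$ only through $X$), apply the data-processing-type decomposition $I(C;A\mid X) = I(C;A\mid X, Z)$ together with $I(C;A\mid Z) \ge I(C;A\mid X,Z)$ when... — actually the cleanest route is: $Z$ is a function of $X$, so $\sigma(Z)\subseteq\sigma(X)$; one cannot directly conclude $I(C;A\mid Z)>0$ from $I(C;A\mid X)>0$ in general (coarsening can restore independence), so here I would instead argue directly that the collider is \emph{not} blocked by $Z$ in the causal graph (the path $C \to \bar X \to X \to Z \leftarrow$ ... with $Z$ a descendant of the collider $X$), invoking the same explaining-away mechanism with $Z$ in place of $X$ and the genericity of $F$ so that $Z$ retains the non-factoring structure.

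The main obstacle is making the "almost surely"/genericity claim rigorous: one must specify with respect to what the statement holds almost surely (presumably over a natural randomization of the generating mechanisms $T$, the latent distributions, and the encoder $F$, or alternatively "for all but a measure-zero set of such configurations"), and then show that the rank-one degeneracy of the kernel $[P(X=x\mid C=c,A=a)]_{c,a}$ carves out a measure-zero set in that parameter space. I expect to handle this by a dimension-counting / analyticity argument: the set of parameter configurations forcing rank $\le 1$ for a.e. $x$ is the zero set of a nontrivial analytic family of $2\times 2$ minors, hence Lebesgue-null, provided one first exhibits a single configuration where explaining-away does occur (which is easy — essentially any genuine augmentation like rotation on non-symmetric images). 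The subtlety to be careful about is distinguishing "$I(C;A\mid X=x)>0$ for a positive-measure set of $x$" from "for all $x$"; only the former is needed, and it follows as soon as the kernel fails to be rank-one on a positive-measure set of $x$.
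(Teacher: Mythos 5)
Your plan is sound and reaches the same conclusion, but by a genuinely different route than the paper. The paper's proof is essentially a citation: it invokes the faithfulness result for probabilistic graphical models (Theorem 3.5 of Koller and Friedman, restated as its Lemma~\ref{lemma:pgm}), which says that for all CPD parameterizations of the diagram outside a measure-zero set, the distribution exhibits exactly the d-separation independencies; since $X$ is a collider on the path $C\to X\leftarrow A$ and $Z$ is a descendant of $X$, neither conditioning set blocks the path, so $A\not\perp C\mid X$ and $A\not\perp C\mid Z$ almost surely, and positivity of the conditional mutual informations follows from $I(A;C\mid\cdot)\ge 0$ with equality iff conditional independence. You instead propose to re-derive the genericity from scratch: characterize $A\perp C\mid X$ as the rank-one factorization $P(X=x\mid C=c,A=a)=g(x,c)h(x,a)$ for a.e.\ $x$, exhibit one non-degenerate configuration, and conclude by an analyticity/dimension-counting argument that the degenerate set is null, finally passing from positivity of $I(C;A\mid X=x)$ on a positive-measure set of $x$ to $I(C;A\mid X)>0$. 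This is essentially a self-contained proof of the same faithfulness fact the paper cites, so it is more laborious but also more informative: it makes explicit what the exceptional configurations look like (rank-one kernels, e.g.\ the direct-concatenation case of Proposition~\ref{prop:uselss-equivariance}), whereas the paper's route is shorter and pins down precisely the measure with respect to which "almost surely" is meant (the space of CPD parameterizations). Two points you should tighten if you carry the plan out: (i) your instinct that the $Z$ statement does not follow from the $X$ statement by coarsening is correct, and your fallback — argue directly that $Z$, as a descendant of the collider, leaves the path unblocked, with genericity over $F$ — is exactly how the paper handles it via d-separation; but note that for a deterministic encoder the genericity over $F$ needs care (a constant $F$ restores $A\perp C\mid Z$), a subtlety the paper's citation-based argument also glosses over; and (ii) you must state up front the parameter space over which "almost surely" is taken, which in the paper is the CPD parameterization space rather than a randomization of $x$.
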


\textbf{Explaining-away helps E-SSL.} In statistics, explaining-away often appears as the selection bias in observational data that misleads causal inference (\eg the Berkson's paradox~\citep{berkson1946limitations}) and demands careful treatment~\citep{yu2020one,brewer2024addressing}. In contrast, explaining-away plays a critical \emph{positive} role in E-SSL. 
In particular, the fact $I(A;C|Z)>0$ implies an important \emph{synergy} effect between $A$ and $C$ during equivariant learning, as shown below:
\begin{equation}
    I(A;C|Z)=H(A|Z)-H(A|Z,C)>0\quad \Longrightarrow \quad H(A|Z)>H(A|Z,C).
    \label{eq:MI-inequality}
\end{equation}
\eqref{eq:MI-inequality} implies that for the same feature $Z$, using class information $C$ gives a better prediction of $A$ (lower uncertainty $H(A|Z,C)$) than without using class features.  Intuitively, given a rotated image, recognizing the object class $C$ in the first place makes it easier to determine the rotation angle $A$. Driven by this synergy effect, the encoder will learn to encode class information $C$ in the representation to assist the equivariant prediction of $A$. \footnote{When taking a closer look, there could be multiple explaining-away effects in Figure~\ref{fig:causal}. For example, we have $\bar A\to X\leftarrow A$. In practice, it could mean that knowing the intrinsic rotation of an image helps predict the rotated angle $A$. Also, we have $\bar A \to X\leftarrow C$, meaning that knowing $\bar A$ is also helpful for predicting the class $C$. This  is a more nuanced analysis of how predicting rotated angles $A$ could finally help predict the class $C$.}
We formally characterize this intuition in the following theorem.
\begin{theorem}[Class features improve equivariant prediction]
Under the data generation process in Figure~\ref{fig:causal}, consider an E-SSL task with input $X$, its class $C_X$, and its representation $Z$. Assume a class representation $Z_C=\phi(C_X)$ that can perfectly predict the label $C_X$ ($\phi$ is an invertible mapping). Then, almost surely, the combined feature $\tilde{Z}=[Z,Z_C]$ obtained by appending $Z_C$ to $Z$ will {strictly} improve the equivariant prediction with larger mutual information $I(A;\tilde{Z})>I(A;Z)$. Also, we have $I(C;\tilde{Z})\geq I(C;{Z})$, so the classification performance improves in the meantime.
\label{thm:representation-mi-inequality}
\end{theorem}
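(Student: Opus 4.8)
The plan is to reduce both inequalities to the chain rule for mutual information together with the explaining-away statement already proved in Lemma~\ref{thm:explaining-away}. Writing $\tilde Z=[Z,Z_C]$ for the concatenation, the chain rule gives the exact identity
\[
I(A;\tilde Z)=I(A;Z)+I(A;Z_C\mid Z),
\]
so it suffices to show that the correction term $I(A;Z_C\mid Z)$ is strictly positive almost surely. First I would use invertibility of $\phi$: since $Z_C=\phi(C_X)$ with $\phi$ a bijection, $Z_C$ and $C_X$ generate the same $\sigma$-algebra and conditional mutual information is invariant under such a relabelling, whence $I(A;Z_C\mid Z)=I(A;C_X\mid Z)$. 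Identifying $C_X$ with the class node $C$ of Figure~\ref{fig:causal} (augmentation does not change the class), Lemma~\ref{thm:explaining-away} yields $I(A;C\mid Z)>0$ almost surely, and therefore $I(A;\tilde Z)>I(A;Z)$ almost surely.

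For the classification claim I would again invoke the chain rule, this time as $I(C;\tilde Z)=I(C;Z)+I(C;Z_C\mid Z)$, and use nonnegativity of conditional mutual information to get $I(C;\tilde Z)\ge I(C;Z)$ with no exceptional set needed. (Equivalently, $Z$ is a deterministic function of $\tilde Z$, so $C\to\tilde Z\to Z$ is a Markov chain and the data-processing inequality applies.) One can additionally note $I(C;\tilde Z)\ge I(C;Z_C)=H(C)$, again by invertibility of $\phi$, so $\tilde Z$ in fact attains the maximal possible label information.

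The remaining work is bookkeeping rather than substance. The one point that needs care is that the "almost surely" qualifier must be exactly the one inherited from Lemma~\ref{thm:explaining-away}: there are genuine exceptions (a constant encoder, or the augmentation-only encoder of Proposition~\ref{prop:uselss-equivariance}) for which $I(A;C\mid Z)=0$ and no strict improvement occurs, so the statement cannot hold for every encoder, only almost surely. A secondary caveat is finiteness: if $C_X$ (hence $Z_C$) were treated as continuous these mutual informations could be infinite, so one should stay with the finitely-supported class variable used throughout the paper (or pass to $\gV$-information as in Appendix~\ref{app:v-information}) to keep every term finite and the chain-rule decompositions valid. With those caveats in place, the argument is a two-line consequence of the chain rule and Lemma~\ref{thm:explaining-away}.
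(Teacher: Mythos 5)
Your proposal is correct and is essentially the paper's own argument: the chain-rule decomposition $I(A;\tilde Z)=I(A;Z)+I(A;Z_C\mid Z)$ combined with invertibility of $\phi$ is just a reformulation of the paper's step $H(A|Z)>H(A|Z,C)=H(A|Z,Z_C)=H(A|\tilde Z)$, with the strict inequality supplied in both cases by Lemma~\ref{thm:explaining-away}, and the second claim likewise matches the paper's $I(C;\tilde Z)=I(C;Z,Z_C)\geq I(C;Z)$. Your added remarks on the exceptional set and finiteness are sensible bookkeeping but not a different route.
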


As an implication of Theorem~\ref{thm:representation-mi-inequality}, to achieve better equivariant prediction, during E-SSL, the model will try to extract more class features, which will jointly improve downstream classification. This explains why during E-SSL with rotation prediction, the classification accuracy also rises along the process,  outperforming the random encoder (Figure~\ref{fig:transformations-test}). 

\begin{figure}[t]
    \centering
    \begin{subfigure}{.3\textwidth}
    \includegraphics[width=\linewidth]{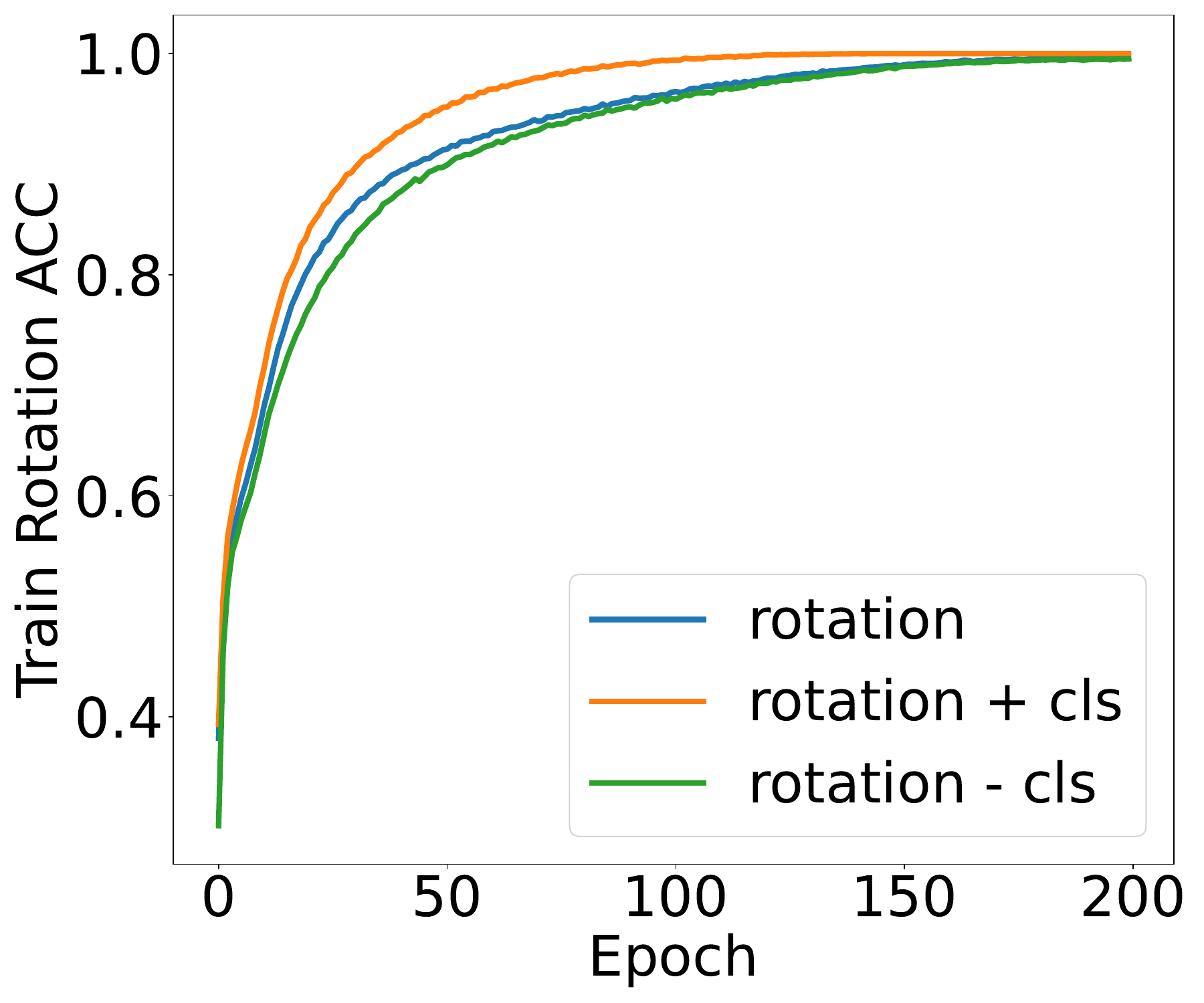}
    \caption{Rotation Accuracy on CIFAR-10, ResNet18} \label{fig:RotACC_cifar10}
    \end{subfigure}
    \begin{subfigure}{.3\textwidth}
    \includegraphics[width=\linewidth]{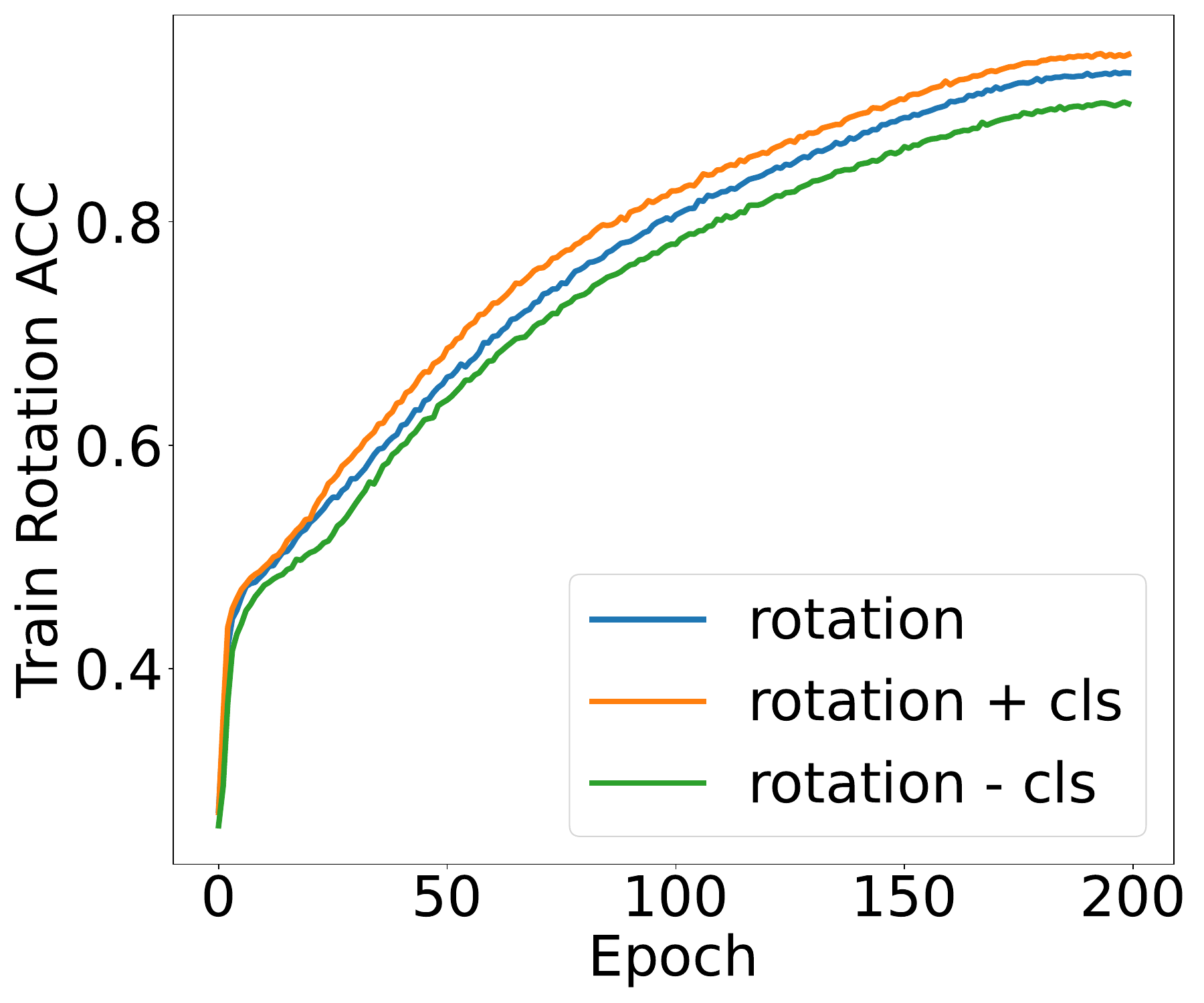}
    \caption{Rotation Accuracy on CIFAR-100, ResNet18}\label{fig:RotACC_cifar100}
    \end{subfigure}
    \begin{subfigure}{.3\textwidth}
    \includegraphics[width=\linewidth]{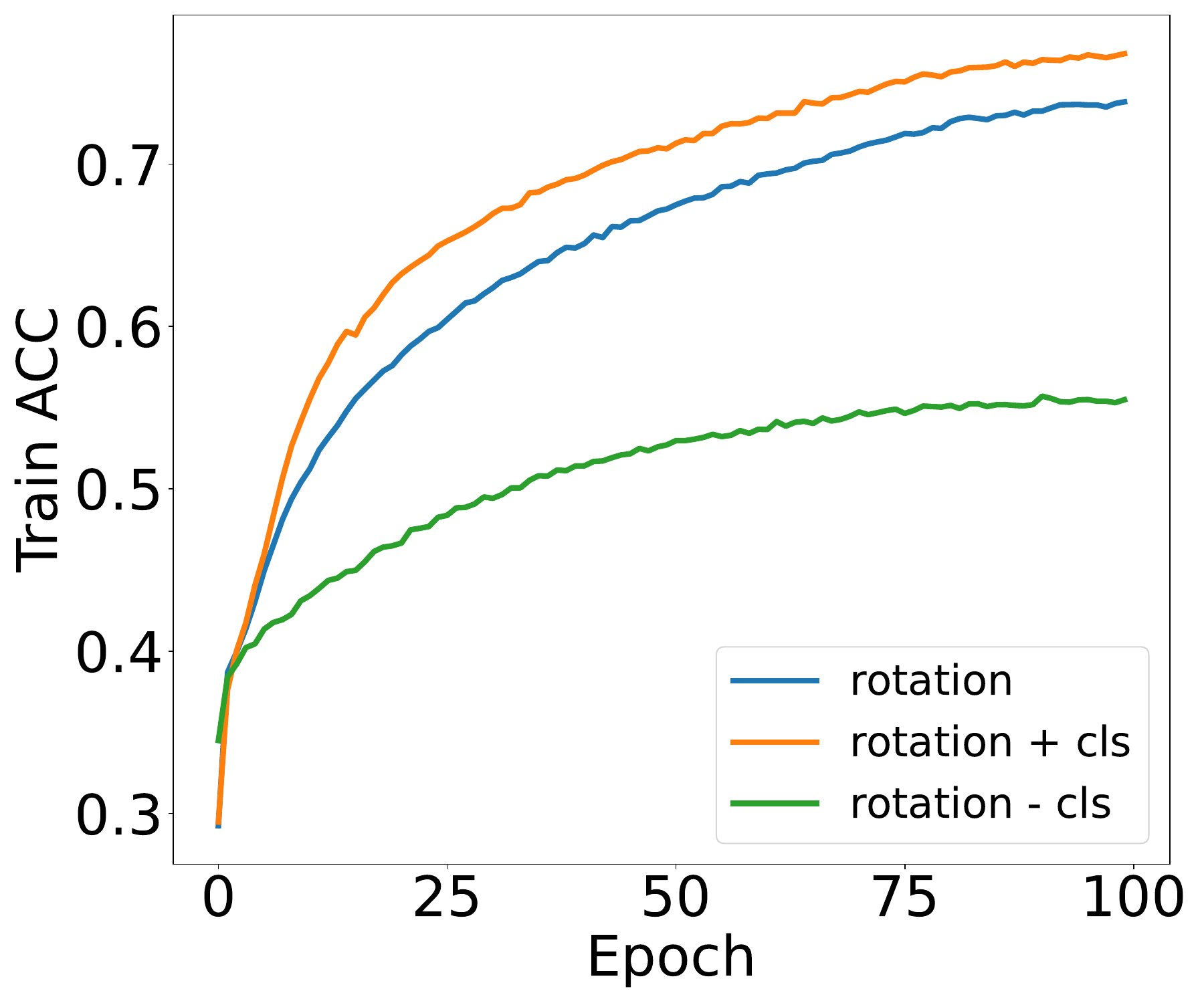}
    \caption{Rotation Accuracy on Tiny-ImageNet-200, Resnet18}\label{fig:RotACC_tinyimagenet}
    \end{subfigure}
    
    \begin{subfigure}{.3\textwidth}
    \includegraphics[width=\linewidth]{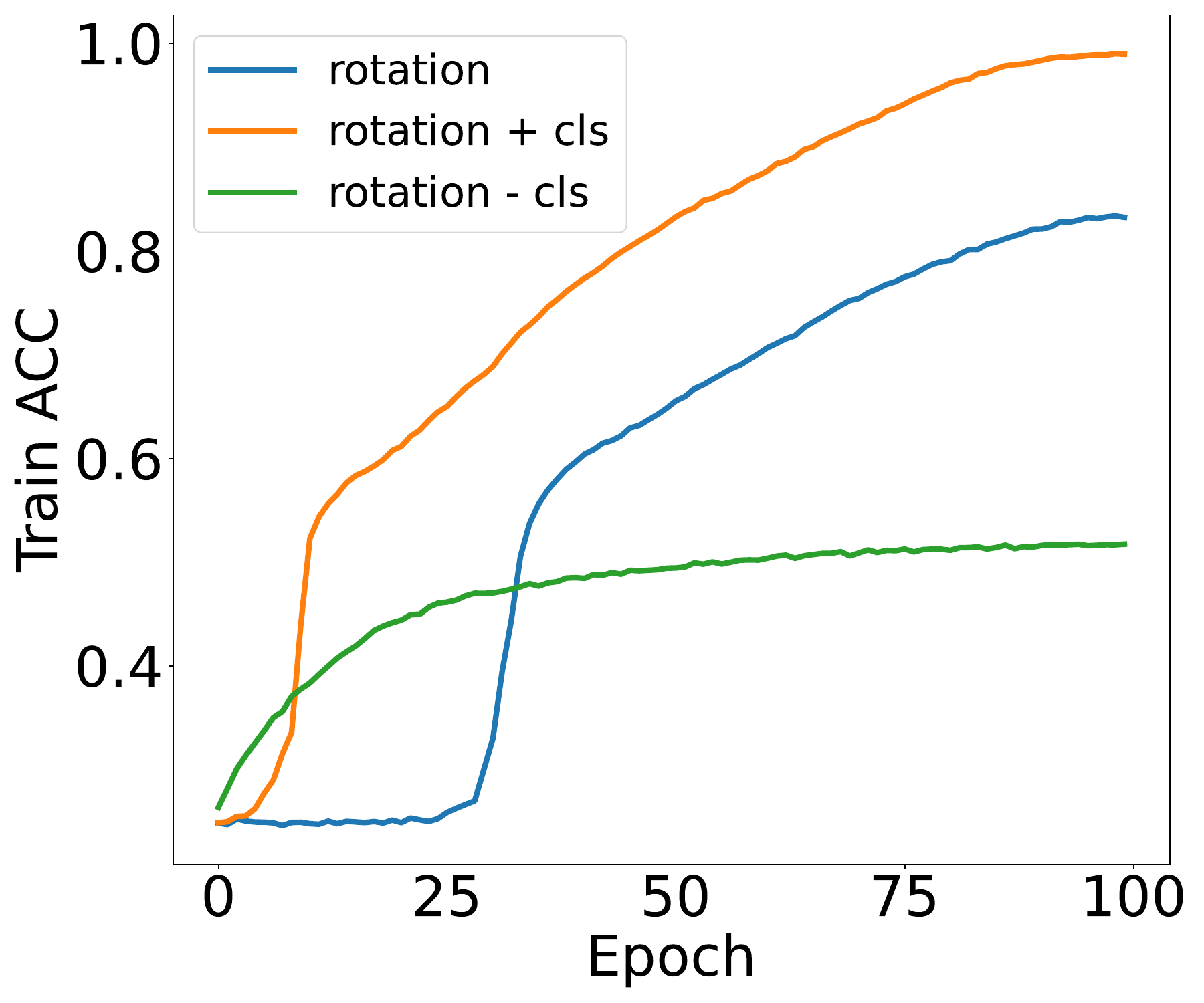}
    \caption{Rotation Accuracy on CIFAR-10, ResNet50}\label{fig:RotACC_resnet50}
    \end{subfigure}
    \begin{subfigure}{.3\textwidth}
    \includegraphics[width=\linewidth]{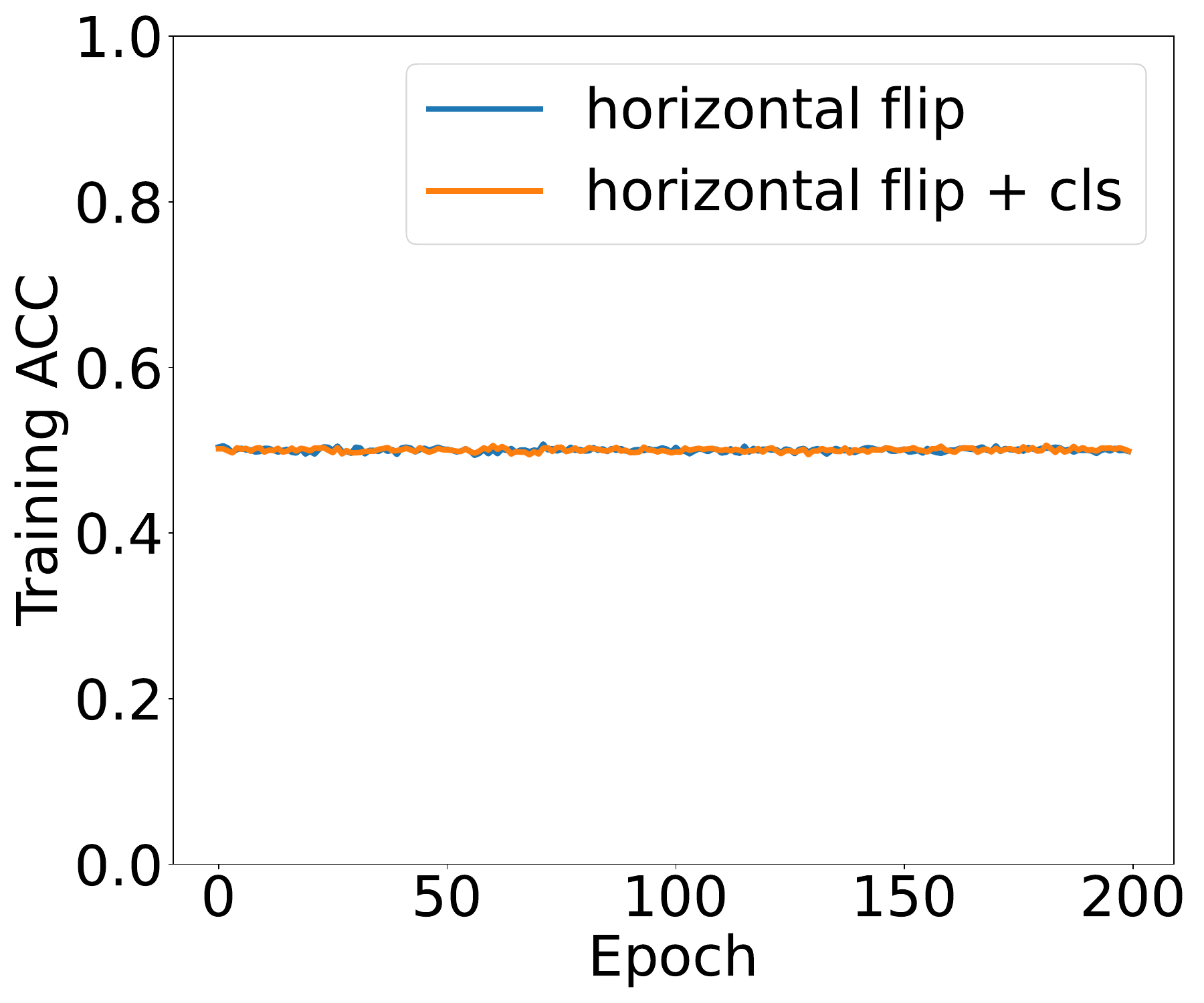}
    \caption{Horizontal Flip Accuracy on CIFAR-10, ResNet18}\label{fig:horizontal_flips}
    \end{subfigure}
    \begin{subfigure}{.3\textwidth}
    \includegraphics[width=\linewidth]{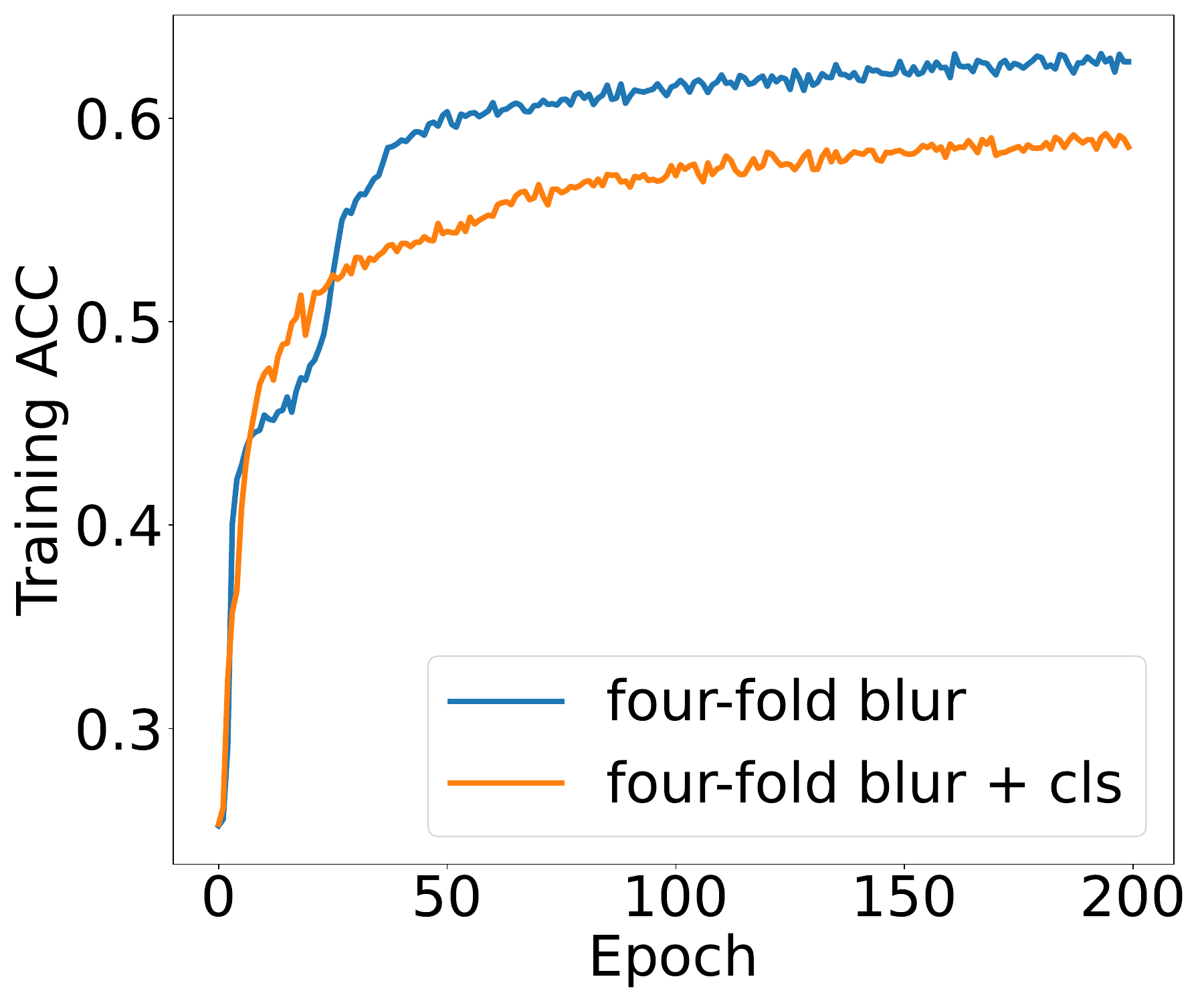}
    \caption{Four-fold Blur Accuracy on CIFAR-10, ResNet18}\label{fig:four_fold_blurs}
    \end{subfigure}
    \caption{A controlled experiment on the influence of class information on equivariant prediction. We include three methods: 1) equivariant prediction (baseline); 2) jointly minimizing equivariant and classification losses (``+cls''); 3) minimizing the equivariant loss while adversarially maximizing the classification loss~\cite{ganin2015domainadversarial} (``- cls''). We study rotation prediction for (a), (b), (c) and (d), horizontal flip for (e), and four-fold blur for (f).}
    \label{fig:synergy}
\end{figure}

\textbf{Remark: Extension to $\gV$-information}. In the discussion above, we mainly adopt Shannon's entropy measures for simplicity, which ignores computational and modeling constraints. Computation-aware notions like $\gV$-information~\cite{v-information} would align our theory better with practice.
Notably, if we replace Shannon information with $\gV$-information, the analyses above still hold (see Appendix~\ref{app:v-information}).
A nice property of $\gV$-information is that feature extraction steps can \emph{increase} the information by making prediction computationally easier (which only decreases information in Shannon's notion instead),  providing better justification for the benefit of representation learning. 
Thus, from the perspective of $\gV$-information, in ESSL, class features \emph{increase the equivariance} \wrt $A$ with easier computation. 

\textbf{Verification of synergy effects via controlled experiments.}
To validate the above analysis in practice, we further carry out a \textit{controlled experiment} to study how class information affects the equivariant pretraining task. Specifically, taking the rotation prediction task as an example, we add or substitute a class prediction loss with an additional linear head in the pretraining objective. In the former case, we explicitly inject class information into the presentation by joint training with the classification loss; in the latter, we explicitly eliminate class information from the representation by adversarially maximizing the classification loss~\cite{ganin2015domainadversarial} (see Appendix~\ref{app:details}).
As shown in Figure~\ref{fig:synergy}, we get slightly better rotation prediction accuracy when explicitly incorporating the class information, while getting worse performance (with a larger margin) when discouraging class information, which agrees well with Theorem~\ref{thm:representation-mi-inequality}. Note that there is still nontrivial training accuracy because the class is not the only factor that can explain equivariant prediction (style features $S$ can also play a role).

\subsection{Maximizing the Synergy Effect: Principles for Practical Designs of E-SSL}
\label{sec:principles}
Our theoretical understanding above not only establishes theoretical explanations for downstream performance, but also provides principled guidelines for E-SSL design. The overall principle is to maximize the synergy $I(A;C|X)=H(A|X)-H(A|X,C)$, which can be understood from the following aspects that explain various E-SSL behaviors that we observe in Section~\ref{sec:background}.

\textbf{Principle I: ``Lossy'' Transformations.} First, let us look at $H(A|X)$, which determines the upper bound of the explaining-away effect. A higher $H(A|X)$ means that the equivariant prediction task is inherently harder. Revisiting Proposition~\ref{prop:uselss-equivariance}, our theory gives a natural and rigorous explanation for why direct concatenation (DC) fails for E-SSL. Essentially, the  DC output $X=[A, \bar X]$ admits a simple linear encoder such that $A$ can be perfectly recovered from $X$, implying $H(A|X)=0$, which leads to $I(A;C|X)=0$, \ie no explaining-away effect. This implies an intriguing property of E-SSL, that in order to attain nontrivial performance on downstream tasks, \emph{the chosen transformation $T$ must be lossy} --- in the sense that one cannot perfectly infer $A$ after the transformation, \ie $H(A|X)>0$.\footnote{For rotation prediction, there exist samples whose rotated angle cannot be uniquely determined, such as, frogs and airplane. Thus, rotation is also a lossy transformation in this sense.}
Considering computational and model constraints in practical scenarios, this task should be at least hard for the chosen training configuration (\ie $H_\gV(C|X)>0$). Only when the transformation is hard enough, neural networks will strive to learn class information to assist its prediction. Indeed, Figure~\ref{fig:transformations} shows that the transformations whose training loss decreases very quickly (\eg grayscale and jigsaw) indeed have relatively poor test accuracy, which further verifies our theory.
    
\textbf{Principle II: Class Relevance.} Aside from task hardness, we also need to ensure $H(A|X,C)$ is low enough; \ie \emph{extracting class information can effectively improve equivariant prediction}. 
As a negative example, with a direct concatenation $X=[C,A]$ as in  Proposition~\ref{prop:uselss-equivariance}, even if we add noise to $A$ such that $H(A|X)>0$, extracting the class $C$ is still unhelpful for predicting $A$. From an information-theoretic perspective, it satisfies $H(A|X)=H(A|X,C)$, so we always have $I(A;C|X)=0$. 
In Figure~\ref{fig:transformations}, horizontal flip and four-fold blur have large training losses until the end of the training, \emph{even if we deliberately inject class features} (see Figures~\ref{fig:horizontal_flips}~\&~\ref{fig:four_fold_blurs}).
This suggests that these equivariant tasks are intrinsically hard and class information does not contribute much to equivariant prediction.
Instead, rotation prediction and vertical flip are hard at the beginning, but the uncertainty can be decreased significantly via learning class information. These transformations thus have a large synergy effect that benefits downstream performance. We conjecture that it is because these transformations are global (\ie changing pixel positions) instead of local changes (\ie only modifying pixel values) like grayscale and color inversion, so class information as global image semantics are more helpful for such tasks.
Another important implication is that the transformation should be class-preserving so as to make class features helpful for the equivariant task. This rule has been verified extensively in contrastive learning \cite{arora,tian2020makes,haochen2021provable,wang2022chaos}.

\begin{wraptable}{r}{0.5\textwidth}
    \caption{Comparison of rotation prediction under different augmentations (CIFAR-10, ResNet18).}
    \label{tab:transformations}
    \resizebox{\linewidth}{!}{
    \begin{tabular}{lcc}
        \toprule 
    Augmentation & Train Rot ACC & Test Cls ACC \\ \midrule
    None  & 99.98 & 56.92 \\
    Crop+flip  & 97.71 & 57.32\\
    SimCLR \cite{simclr}  & \textbf{83.26} & \textbf{59.06} \\
    \bottomrule
    \end{tabular}
    }
\end{wraptable}

\textbf{Principle III: Shortcut Pruning.} Note that in the causal diagram (Figure~\ref{fig:causal}), class $C$ and style $S$ features jointly determine the raw input $\bar X$. According to our theory, style features may also explain the equivariant target $A$ (\ie $I(A;S|X)>0$). Since style features are often easier for NN learning~\cite{geirhos2020shortcut}, they can become shortcuts for equivariant prediction such that class features are suppressed~\cite{geirhos2020shortcut,robinson2021can}. Therefore, to ensure the learning of class-related semantic features, it is important to avoid these shortcuts. One effective approach to corrupt these style features (to some extent) through aggressive data augmentation, \eg color jitter, cropping, and blurring commonly adopted in contrastive learning, without corrupting class features a lot. Indeed, \citet{simclr} show that the choice of data augmentations plays a vital rule in the success of contrastive learning, and \citet{tian2020makes} point out its goal is to prune class-irrelevant features. Here, we generalize this principle to E-SSL as well through our explaining-away framework.
As shown in Table~\ref{tab:transformations}, the aggressive data augmentations from SimCLR also bring much better performance for E-SSL methods, bringing RotNet (82.26\%) close to SimCLR (89.49\%). It demonstrates that instead of merging with contrastive learning as in all recent E-SSL works \cite{wang2021residual,iclr-essl,devillers2022equimod,garrido2023self,park2022learning,gupta2023structuring}, learning from equivariance \emph{alone} can potentially achieve competitive performance. 

\subsection{Analysis on the Influence of Transformation}
\label{sec:toy-model}

The theory in Section~\ref{sec:explaining-away} guarantees that E-SSL will learn class features almost surely under general conditions. Yet, without further knowledge, it is generally hard to derive more quantitative results for downstream performance. For a concrete discussion, we consider a simplified data generation process as an exemplar. Note that simplified data models are frequently adopted in the literature of self-supervised learning theory~\citep{tian2021understanding,wen2021toward} to gain insights for their real-world behaviors.

\textbf{Setup.}
We consider a simple combination of the class $C$ and the augmentation $A$ as a weighted sum,
\begin{equation}
X=A+\lambda C,
\label{eq:sum-transform}
\end{equation}
where $\lambda\in\sR$ is the mixing coefficient. Here, we assume a balanced class setting, where $C\sim\operatorname{Cat}(N_C)$ follows a uniform categorical variable over $N_C$ classes: $0, 1, \dots, N_C-1$. 
Similarly, we assume that the augmentation $A\sim\operatorname{Cat}(N_A)$ is an \emph{independent} uniform categorical variable over $N_A$ classes: $0,1,\dots,N_A-1$. 
In this simple setting, it is easy to see that given $X$, when $C$ is known, we will have a perfect knowledge of $A$ as $A=X- \lambda C$, indicating $H(A|X,C)=0$. Therefore, we have $I(A;C|X)=H(A|X)$. In other words, transformations only influence the explaining-away effect through the uncertainty of predicting $A$. This is an extreme case for the ease of theoretical analysis. Nevertheless, the following theorem shows that under this setup, we can have a quantitative characterization of the optimal choice of $N_A$ and $\lambda$ that sheds light on the design of E-SSL methods.
\begin{theorem}
The following results hold for the additive problem in \eqref{eq:sum-transform}:
\begin{enumerate}
    \item[1)] \textbf{Balanced Mixing is Optimal.} With $N_C$ and $N_A$ held constant, $I(A;C|X)$ is maximized under balanced mixing with $\lambda=1$. 
    \item[2)] \textbf{Large Action Space is Beneficial.} With $N_C$ held constant and $\lambda=1$, we have a lower bound of the mutual information $I(A;C|X) \geq \ln{N_C} - \frac{1}{N_A} \left[(N_C-1)\ln{N_C}-\frac{(N_C-1)^2}{N_C}\ln{(N_C-1)}+\frac{N_C-2}{2}\right]$, which is \textbf{monotonically increasing} with respect to $N_A$. 
\end{enumerate}
\label{thm:additive-problem}
\end{theorem}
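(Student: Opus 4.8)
The plan is to reduce both parts to the single identity
\[
I(A;C\mid X)=H(A\mid X)=\ln N_A+\ln N_C-H(X),
\]
valid whenever $\lambda\neq 0$: the excerpt already notes $H(A\mid X,C)=0$, and since for fixed $A$ the map $C\mapsto X=A+\lambda C$ is a bijection onto its image, $H(X\mid A)=H(C)=\ln N_C$, whence $H(A\mid X)=H(A)+H(X\mid A)-H(X)$. Thus both claims become statements about the entropy of the ``generalized convolution'' $X=A+\lambda C$ of two independent uniforms: Part 1 says $H(X)$ is minimized at $\lambda=1$, and Part 2 gives a quantitative upper bound on $H(X)$ there.

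For Part 1, fix $N_A,N_C$ and minimize $H(X)$ over $\lambda\neq0$. Writing $m_v$ for the size of the fiber $\{(a,c):a+\lambda c=v\}$ inside $\{0,\dots,N_A-1\}\times\{0,\dots,N_C-1\}$, one has $H(X)=\ln(N_AN_C)-\tfrac1{N_AN_C}\sum_v g(m_v)$ with $g(t)=t\ln t$, so minimizing $H(X)$ is the same as maximizing $\sum_v g(m_v)$. Since $g$ is convex, $\sum_v g(m_v)$ is Schur-convex in the vector of fiber sizes, so it suffices to show that the $\lambda=1$ fiber-size vector --- the trapezoidal convolution profile $(1,2,\dots,\min(N_A,N_C),\dots,2,1)$ --- majorizes the fiber-size vector for every other admissible $\lambda$, i.e.\ has the largest sum of its $k$ largest entries for each $k$ (padding shorter profiles with zeros, which is harmless since $g(0)=0$). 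Scaling invariance $H(A+\lambda C)=H(\lambda^{-1}A+C)$ and the symmetry $\lambda\leftrightarrow-\lambda$ let one restrict attention, e.g.\ to $\lambda\in(0,1]$, which streamlines the case analysis (large $|\lambda|$ simply makes all fibers singletons, the extreme opposite configuration). I expect this majorization step --- verifying that the convolution profile dominates all competing fiber-size profiles --- to be the main obstacle; the rest of Part 1 is bookkeeping, and the $N_A=N_C$ case is cleanest because there a maximal fiber of size $N_A$ already forces $\lambda=\pm1$.

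For Part 2, set $\lambda=1$ and (in the relevant regime $N_A\ge N_C$) read off the convolution multiplicities $m_s=\#\{(a,c):a+c=s\}$: namely $m_s=s+1$ for $0\le s\le N_C-1$, then a plateau $m_s=N_C$ for $N_C\le s\le N_A-1$, then the symmetric tail $m_s=N_A+N_C-1-s$ for $N_A\le s\le N_A+N_C-2$. Substituting into $H(A\mid X)=\tfrac1{N_AN_C}\sum_s m_s\ln m_s$ and collecting terms (the rising part and the tail each contribute $\sum_{j=1}^{N_C-1}j\ln j$, the plateau contributes $(N_A-N_C+1)N_C\ln N_C$) gives the closed form
\[
H(A\mid X)=\ln N_C-\frac1{N_A}\Bigl[(N_C-1)\ln N_C-\frac2{N_C}\sum_{j=1}^{N_C-1}j\ln j\Bigr].
\]
It remains to lower-bound $\sum_{j=1}^{N_C-1}j\ln j$. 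Since $x\ln x$ is increasing on $[1,\infty)$ and vanishes at $x=1$, we get $\sum_{j=1}^{N_C-1}j\ln j=\sum_{j=2}^{N_C-1}j\ln j\ge\int_1^{N_C-1}x\ln x\,dx=\tfrac12(N_C-1)^2\ln(N_C-1)-\tfrac14(N_C-1)^2+\tfrac14$; multiplying by $2/N_C$ and using $(N_C-1)^2-1=N_C(N_C-2)$ turns the bracket above into exactly $(N_C-1)\ln N_C-\tfrac{(N_C-1)^2}{N_C}\ln(N_C-1)+\tfrac{N_C-2}{2}$, which is the claimed bound.

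Finally, the monotonicity assertion is immediate from the shape of this bound: it equals $\ln N_C-c(N_C)/N_A$ with $c(N_C)=(N_C-1)\ln N_C-\tfrac{(N_C-1)^2}{N_C}\ln(N_C-1)+\tfrac{N_C-2}{2}$, and $c(N_C)>0$ for every $N_C\ge2$ (compare the first two terms via $\ln N_C>\ln(N_C-1)$ together with $N_C-1>(N_C-1)^2/N_C$, and note $N_C-2\ge0$), so $N_A\mapsto\ln N_C-c(N_C)/N_A$ is strictly increasing and tends to $\ln N_C$. Apart from the majorization lemma underlying Part 1, every step is an elementary computation once the convolution multiplicities and the integral estimate are in hand.
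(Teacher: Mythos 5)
Your Part 2 is correct and is essentially the paper's own computation: the same reduction $I(A;C\mid X)=H(A\mid X)=\ln N_A+\ln N_C-H(A+C)$ (using $H(A\mid X,C)=0$), the same convolution multiplicities for $N_A\ge N_C$, the same closed form $\frac{N_A-N_C+1}{N_A}\ln N_C+\frac{2}{N_AN_C}\sum_{j=1}^{N_C-1}j\ln j$, the same estimate $\sum_{j=1}^{N_C-1}j\ln j\ge\int_1^{N_C-1}x\ln x\,dx$, and monotonicity from positivity of the bracket (which you in fact justify a bit more explicitly than the paper does).

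The genuine gap is in Part 1. You correctly reduce it, as the paper does, to showing that among all fiber-size profiles of $X=A+\lambda C$ the $\lambda=1$ (trapezoidal) profile is extremal, and your Schur-convexity framing is sound; but the decisive majorization claim is only asserted, and you yourself flag it as ``the main obstacle,'' so Part 1 is not proved as written. This is exactly where the paper's proof does its work: for $\lambda>0$, two cells $(a,c)$ and $(a',c')$ of the $N_C\times N_A$ table can share a value only if $a'$ and $c'$ move in opposite directions, so the cell $(a,c)$ lies in a group of size at most $\min(a,\,N_C-1-c)+\min(c,\,N_A-1-a)+1\le a+c+1$; counting the two disjoint corner triangles then gives, for each $j\le N_A-1$, the constraint $x_1+2x_2+\cdots+jx_j\ge j(j+1)$ on the numbers $x_t$ of groups of size $t$ (this is the complementary form of your majorization statement). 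An Abel-summation step using $y_1>y_2>\cdots>y_{N_A}$ — the discrete counterpart of your Schur-convexity argument — then lower-bounds $H(A+\lambda C)$, with equality iff $x_1=\cdots=x_{N_A-1}=2$, i.e.\ iff every secondary diagonal of the table forms a single group, which forces $\lambda=1$. (Negative $\lambda$ is handled by the relabeling $C\mapsto N_C-1-C$, as in your symmetry remark; also note your scaling reduction to $\lambda\in(0,1]$ swaps the roles of $A$ and $C$, which is harmless for the fiber-size multiset but should be said.) If you supply this counting lemma, or any other proof of the majorization claim, your argument closes and coincides with the paper's; without it, Part 1 remains open.
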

Theorem~\ref{thm:additive-problem} has two important implications. First, it suggests that a balanced mixing of $A$ and $C$ gives the optimal synergy effect, since it can  maximize the uncertainty of using $X$ for predicting $A$ alone (agreeing with Principle I).
Second, it shows that a large action space ($|\mathcal{A}|$) is preferred, making it harder to use spurious features (e.g., the boundary values of $C$ and $A$) as a shortcut to determine $A$ (agreeing with Principles II and III). These theoretical results illustrate our analyses above and provide insights for understanding advanced designs in E-SSL methods, as elaborated below.

\section{Understanding Advanced E-SSL Designs}
\label{sec:advanced}
In Section~\ref{sec:theory}, we have established a theoretical understanding of basic E-SSL through the explaining-away effect. However, basic E-SSL (like rotation prediction) often fails to achieve satisfactory performance, and many advanced designs have been proposed to enhance E-SSL performance~\cite{wang2021residual,iclr-essl,devillers2022equimod,garrido2023self,park2022learning,gupta2023structuring}.  In this section, we further explain how these advanced designs improve performance by enhancing the synergy effect between class information and equivariant prediction.

\subsection{Fine-grained Equivariance}

A conclusion from Theorem~\ref{thm:additive-problem} is that a larger action space of the transformation $A$ benefits the explaining-away effect by increasing the task difficulty $H(A|X)$. Guided by this principle, one way to improve E-SSL is through learning from more fine-grained equivariance variables with a larger action space $(|\gA|)$, which encourages models to learn diverse features and avoid feature collapse for specific augmentations. For example, four-fold rotation is a $4$-way classification task while CIFAR-100 has 100 classes. When the neural networks are expressive enough such that it clusters samples with the same augmentation to (almost) the same representation (known as neural collapse~\citep{papyan2020prevalence}), the class features also degrade or vanish, which hinders downstream classification. For example, Table~\ref{tab:transformations} shows that for rotation prediction, stronger augmentations suffer from less feature collapse (lower training accuracy), while enjoying better classification accuracy. Indeed, we show that the advantages of state-of-art SSL methods can be understood through this information-theoretic perspective.

\textbf{Information-theoretic Understanding of Instance Discrimination.} As disclosed in Section~\ref{sec:background}, contrastive learning is essentially an E-SSL task with equivariance prediction of instances. Specifically, each raw example $\bar x_i$ serves as an instance-wise class, forming an action space $\mathcal{I}$, where all augmented samples of $\bar x_i$ belong to the class $i$. Therefore, the instance classification task has an action space of $|\mathcal{I}|=N$, where $N$ is the number of training dataset that is much larger than rotation prediction with $|\mathcal{A}|=4$, making instance discrimination a harder task, especially under strong data augmentations~\citep{wu2018unsupervised,dosovitskiy2014discriminative}. Since the instance index $I$ is also \emph{independent of the class} variable $C$, it is not fully clear why it is helpful for learning class-relevant features.\footnote{Existing theories rely on strong assumptions between on data augmentation (such as, the augmentation does not change image classes), which is often violated in practice. \cite{arora,wang2022chaos}.} Instead, our explaining-away theory gives a natural explanation from the instance classification perspective.
In this way, our explanation of E-SSL can be regarded as a unified understanding of existing SSL variants.

\textbf{Equivariance Beyond Instance.} Although contrastive learning already adopts a very large action space with $|\mathcal{I}|=N$, there is recent evidence showing that it can still learn shortcuts~\cite{robinson2021can,xiao2020should} and lack feature diversity~\cite{wei2022contrastive}. Therefore, it is natural to consider even finer-grained equivariance, such as learning to predict patch-level or pixel-level features~\cite{ijepa}, inputs~\cite{mae}, or tokenized patches~\citep{beit}, which comprises many variants of SSL methods, ranging from MAE~\cite{mae}, BERT~\cite{bert}, to diffusion models~\citep{ho2020denoising,song2021score}.
Here, either random mask~\cite{bert} or Gaussian noise~\cite{ho2020denoising} can be viewed as random variables (similar to the rotated angle in rotation prediction) and is independent of the class semantics, so they fit into our theory as well. Features learned from these tasks do show more diversity in practice and benefit downstream tasks requiring fine-grained semantics \cite{mae,hudson2023soda}. Therefore, our theory provides a principled way to understanding the benefits of fine-grained supervision in SSL.

\subsection{Multivariate Equivariance}

As discussed in Section~\ref{sec:principles}, equivariant prediction may have class-irrelevant features as shortcuts, while corrupting these features (\eg color) with data augmentation  might affect certain downstream tasks (\eg flower classification that requires color information too). A more principled way that has been explored recently is through joint prediction of multiple equivariance variables \cite{wang2021residual,iclr-essl,devillers2022equimod,garrido2023SIE,park2022learning,gupta2023structuring}, which we refer to as multivariate equivariance.
In the following theorem, we show that multivariate equivariance is provably beneficial since it \textbf{monotonically increases the synergy effect} between class information and equivariant prediction, as shown in the following theorem.
\begin{theorem}
For two transformation variables $A_1, A_2$, we will always have $I(A_1,A_2;C|Z)\geq \max\{I(A_1;C|Z),I(A_2;C|Z)\}$. In other words, multivariate equivariance brings strengthens the explaining-away effect, with a gain of $g=\max\{I(A_2;C|Z,A_1), I(A_1;C|Z,A_2)\}$.
\label{thm:multivariate-equivariance}
\end{theorem}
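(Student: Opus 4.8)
The plan is to derive the whole statement from two elementary and completely general facts about Shannon information: the chain rule for conditional mutual information and the non-negativity of conditional mutual information. In particular, no property of the causal diagram in Figure~\ref{fig:causal} is needed here; the inequality holds for an arbitrary joint distribution of $(A_1,A_2,C,Z)$, which also makes the statement immediately reusable for more than two transformation variables by induction.

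First I would expand $I(A_1,A_2;C|Z)$ by the chain rule in both possible orders,
\[
I(A_1,A_2;C|Z)=I(A_1;C|Z)+I(A_2;C|Z,A_1)=I(A_2;C|Z)+I(A_1;C|Z,A_2).
\]
Since conditional mutual information is always non-negative, $I(A_2;C|Z,A_1)\ge 0$ and $I(A_1;C|Z,A_2)\ge 0$, so the first identity gives $I(A_1,A_2;C|Z)\ge I(A_1;C|Z)$ and the second gives $I(A_1,A_2;C|Z)\ge I(A_2;C|Z)$. Taking the larger of the two right-hand sides yields $I(A_1,A_2;C|Z)\ge\max\{I(A_1;C|Z),I(A_2;C|Z)\}$, which is the main claim.

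For the quantitative gain, I would subtract the two chain-rule identities to get the ``reciprocity'' relation $I(A_1;C|Z)-I(A_2;C|Z)=I(A_1;C|Z,A_2)-I(A_2;C|Z,A_1)$, which shows that the ordering of the single-variable synergies $I(A_j;C|Z)$ matches the ordering of the conditional terms $I(A_1;C|Z,A_2)$ versus $I(A_2;C|Z,A_1)$. Assuming without loss of generality $I(A_1;C|Z)\ge I(A_2;C|Z)$, we then have $g=\max\{I(A_2;C|Z,A_1),I(A_1;C|Z,A_2)\}=I(A_1;C|Z,A_2)$, and the second chain-rule identity reads $I(A_1,A_2;C|Z)=I(A_2;C|Z)+g=\min\{I(A_1;C|Z),I(A_2;C|Z)\}+g$, i.e. jointly predicting from $(A_1,A_2)$ improves the synergy over the weaker single-variable predictor by exactly $g$ (and over the stronger one by the remaining, smaller conditional term). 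There is essentially no hard step in this proof: it is the chain rule plus $I(\cdot;\cdot|\cdot)\ge 0$, and the only care required is the bookkeeping of which conditional term realizes the maximum in $g$, which the subtraction identity settles. The one genuine subtlety, if one wants the $\gV$-information version promised in the Remark, is to check that the chain-rule decomposition and non-negativity used above are exactly the properties of conditional $\gV$-information established in Appendix~\ref{app:v-information}; modulo that, the argument is verbatim the same.
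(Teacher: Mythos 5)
Your proof is correct: the two chain-rule expansions of $I(A_1,A_2;C|Z)$ together with non-negativity of conditional mutual information give exactly the claimed inequality, and the argument indeed needs no structural assumption from Figure~\ref{fig:causal} and extends to more variables by induction. For comparison, note that the paper never actually writes out a proof of Theorem~\ref{thm:multivariate-equivariance} in Appendix~\ref{app:proofs} (despite the blanket claim that all proofs appear there), so there is no alternative route to contrast with; the chain-rule decomposition is clearly the intended argument, since the stated gain $g$ is precisely one of the chain-rule residual terms. Your extra bookkeeping step is a genuine clarification rather than a redundancy: as you observe, the increase over the \emph{stronger} single-variable synergy $\max\{I(A_1;C|Z),I(A_2;C|Z)\}$ equals the \emph{smaller} of the two conditional terms, while $g=\max\{I(A_2;C|Z,A_1),I(A_1;C|Z,A_2)\}$ is the gain over the weaker one; the theorem's informal phrasing "with a gain of $g$" glosses over this, and your reciprocity identity $I(A_1;C|Z)-I(A_2;C|Z)=I(A_1;C|Z,A_2)-I(A_2;C|Z,A_1)$ pins it down cleanly. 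Your closing caveat about the $\gV$-information version is also apt, since the chain rule is not automatic for conditional $\gV$-information and would need the predictive family to be rich enough for the analogous decomposition.
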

Theorem~\ref{thm:multivariate-equivariance} can also be easily extended to more equivariant variables. Note that the gains of multivariate equivariance $I(A_2;C|Z,A_1)$ reflects the amount of additional information that the class information $C$ can explain away $A_2$ under the same value of $A_1$; therefore, more diverse augmentations provide a large gain in the synergy effect. Recent works on image world model show that equivariance to multiple transformation delivers better downstream performance and outperforms invariant learning~\cite{garrido2024learning}.

\subsection{Model Equivariance}
\label{sec:model-equivariance}
\begin{table}[t]\centering
\setlength{\tabcolsep}{3pt}
    \caption{Training rotation prediction accuracy and test linear classification accuracy under different base augmentations (CIFAR-10, ResNet18).}
    \label{tab:equivariant-cifar10}
    \begin{tabular}{lllccc}
        \toprule 
    Dataset & Augmentation & Network & Train Rotation ACC & Test Classification ACC & Gain \\ \midrule
    \multirow{6}{*}{CIFAR-10} & \multirow{2}{*}{None} & ResNet18 & 99.98 & 56.92 & \\
    & & EqResNet18 & \textbf{100.00} & \textbf{72.32} & \green{\textbf{+16.40}} \\
    & \multirow{2}{*}{Crop\&Flip} & ResNet18 & {97.71} & {57.32}&  \\
    & & EqResNet18 & \textbf{99.97} & \textbf{82.54 }& \green{\textbf{+25.22}} \\
    & \multirow{2}{*}{SimCLR} & ResNet18 & 83.26 & 59.06 & \\
    & & EqResNet18 & \textbf{91.98} & \textbf{82.26} & \green{\textbf{+23.20}} \\
    \bottomrule
    \end{tabular}
\end{table}

Apart from the design of transformations that is the main focus of E-SSL methods, an often overlooked part is the equivariance of the backbone models, which we call model equivariance. Intriguingly, we find that equivariant networks can be very helpful for E-SSL when \emph{the transformation equivariance aligns well with model equivariance}. 

\textbf{Setup.} We compare a standard non-equivariant ResNet18~\citep{he2016deep} and an equivariant ResNet18 (EqResNet18) \wrt the $p4$ group (consisting of all compositions of translations and $90$-degree rotations)~\cite{cohen2016group} of similar parameter sizes. The models are pretrained on CIFAR-10 and CIFAR-100 for 200 epochs with rotation prediction, and then the learned representations are evaluated with a linear probing (LP) head for downstream classification (details in Appendix~\ref{app:details}).  Note that a rotation-equivariant model does not necessarily predict rotation angles perfectly, since in E-SSL, the model only has access to the transformed input but not the ground-truth transformation.

As shown in Table~\ref{tab:equivariant-cifar10} and~\ref{tab:equivariant-cifar100} (see Appendix~\ref{app:details}), we find that equivariant models bring significant gains for rotation prediction by more than 20\% on CIFAR-10 and CIFAR-100. Under aggressive data augmentations (\eg SimCLR ones), equivariant models provide better equivariant prediction of rotation with high accuracy (91.98\% \vs 83.26\% on CIFAR-10 and 82.69\% \vs 68.29\% on CIFAR-100), which also yields better performance on downstream classification with 23.20\% and 26.46\% higher accuracy respectively. Even more surprisingly, with mild augmentations (no or crop\&flip), both models achieve perfect rotation prediction, while equivariant models can still improve classification accuracy a lot.  

Therefore, we find that under compatible equivariance, equivariant models have significant advantages for E-SSL in terms of both self-supervised pretraining (better pretraining accuracy) and downstream generalization (best classification accuracy). The following theorem justifies this point by showing that the mutual information \wrt the transformation $A$ lower bounds the mutual information \wrt the classification $C$. Therefore, given the same equivariant task (\eg same data augmentations), features with better equivariant prediction (larger lower bound) will also have more class information.
\begin{theorem}
    For any representation $Z$, its  mutual information with the equivariant learning target $A$ lower bounds its mutual information with the downstream task $C$ as follows: 
    \begin{equation}
      I(Z;A)\leq I(Z;C)-I(X;A|C).
    \end{equation}
    \label{thm:lower-bound}
\end{theorem}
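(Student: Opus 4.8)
The plan is to obtain the inequality purely from standard identities for Shannon mutual information, using two structural facts that can be read off the causal diagram in \figref{fig:causal}: (i) the augmentation $A$ is drawn independently of the class $C$, so $I(A;C)=0$; and (ii) the representation $Z=F(X)$ is a deterministic function of the augmented input $X$, so $(A,C)\to X\to Z$ is a Markov chain, and stays a Markov chain after we condition on $C$ (since $Z$ depends on $X$ alone). No assumption about how $Z$ was trained is needed; the statement is meant to hold for every $Z$ of this form.

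The first step is to expand the joint term $I(Z;A,C)$ in the two ways the chain rule allows,
\[
  I(Z;A,C)=I(Z;C)+I(Z;A\mid C)=I(Z;A)+I(Z;C\mid A),
\]
which immediately gives $I(Z;C)=I(Z;A)+I(Z;C\mid A)-I(Z;A\mid C)$. Dropping the nonnegative term $I(Z;C\mid A)$ isolates $I(Z;A)$ as a lower bound for $I(Z;C)$ up to the conditional residual $I(Z;A\mid C)$. The second step replaces that residual by a quantity computed from the data rather than from the learned feature: applying the data-processing inequality \emph{conditionally on the class} --- for each value $C=c$, $A\to X\to Z$ is still Markov because $Z=F(X)$ --- yields $I(Z;A\mid C)\le I(X;A\mid C)$. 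Combining the two steps is what is intended to produce the bound of Theorem~\ref{thm:lower-bound}, with $I(X;A\mid C)$ as the correction term; alongside it one should record the identity $I(X;A\mid C)=I(X;A)+I(A;C\mid X)$ (valid because $A\perp C$), which shows the correction is exactly the synergy term $I(A;C\mid X)$ of Lemma~\ref{thm:explaining-away} plus the raw augmentation information already present in $X$, hence nonnegative, so that the correction genuinely tightens the trivial comparison between $I(Z;A)$ and $I(Z;C)$.

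The step I expect to be the main obstacle is precisely this final combination: getting $I(X;A\mid C)$ to land with the sign and on the side stated in Theorem~\ref{thm:lower-bound}. The chain-rule manipulation only directly controls $I(Z;A\mid C)$, and conditioning on $C$ can only \emph{enlarge} the $A$--$X$ information (the collider/explaining-away effect makes $I(X;A\mid C)\ge I(X;A)$), so one must keep careful track of which slack is being spent --- the nonnegative $I(Z;C\mid A)$ and the conditional-DPI gap $I(X;A\mid C)-I(Z;A\mid C)=I(X;A\mid Z,C)$ --- rather than discarding terms too eagerly. Concretely, I would aim to show that the $A$-information the encoder discards given the class, $I(X;A\mid Z,C)$, is dominated by the class information the feature retains; this is the point at which the full causal structure of \figref{fig:causal} (and not merely the Markov chain $(A,C)\to X\to Z$) must be invoked. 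Everything else is routine bookkeeping, and the $\gV$-information version then follows by the identical argument once a conditional $\gV$-information data-processing inequality is available, as set up in the appendix.
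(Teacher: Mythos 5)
Your steps (1)--(3) are exactly the paper's proof: the paper writes the first step via the interaction information, $I(Z;C)-I(Z;A)\ge I(Z;C;A)-I(Z;A)=-I(Z;A\mid C)$, which is the same chain-rule bookkeeping as your decomposition of $I(Z;A,C)$ with the nonnegative term $I(Z;C\mid A)$ discarded, and it then applies the conditional data-processing inequality $I(Z;A\mid C)\le I(X;A\mid C)$ just as you do. Up to notation you have reproduced the intended argument in full.

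The ``main obstacle'' you flag --- landing $I(X;A\mid C)$ with a minus sign on the right-hand side --- is not something you failed to overcome; it is a sign inconsistency in the theorem statement itself. The paper's own chain of inequalities concludes at $I(Z;C)-I(Z;A)\ge -I(X;A\mid C)$, i.e. $I(Z;A)\le I(Z;C)+I(X;A\mid C)$, which is precisely your conclusion. The displayed bound $I(Z;A)\le I(Z;C)-I(X;A\mid C)$ does not follow from that chain and is false in general: take $Z=X$ in the additive model of Section~\ref{sec:toy-model} with $N_A$ large, so that $I(X;A)$ and $I(X;A\mid C)$ are both of order $\ln N_A$ while $I(X;C)\le H(C)=\ln N_C$ stays bounded. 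Indeed, your own identity $I(X;A\mid C)=I(X;A)+I(A;C\mid X)\ge I(X;A)\ge I(Z;A)$ shows the minus version would force $I(Z;C)\ge 2\,I(Z;A)$, which nothing in the causal diagram guarantees. So do not pursue the program of dominating $I(X;A\mid Z,C)$ by retained class information --- no further structural assumption can rescue the stated sign --- and regard your derivation as complete for the corrected (plus-sign) form of the bound.
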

Here, a small gap $I(X;A|C)$ means a better generalization between these two tasks. Because $I(X;A|C)=H(A|X,C)$ is a lower bound of $I(A;C|X)$ that indicates class relevance, it further justifies our Principle II (Section~\ref{sec:principles}) that better class relevance brings better E-SSL performance.

\subsection{Strict Equivariant Objectives}

\begin{wraptable}{r}{0.5\textwidth}
    \caption{Comparison of augmentation-aware and truly equivariant methods (CIFAR-10, ResNet18).}
    \label{tab:care}
    \resizebox{\linewidth}{!}{
    \begin{tabular}{lcc}
        \toprule 
    Equivariant Loss & Train Rot ACC & Test Cls ACC \\ \midrule
    CE loss & 97.71 & 57.32 \\
    CARE loss & 99.95 & 64.50 \\
    \bottomrule
    \end{tabular}
    }
\end{wraptable}

Mathematically, an exact definition of equivariance requires that for each transformation $a$ in the input space, there is a corresponding transformation $T_a$ in the representation space so that $f(a(x)) \approx T_a f(x)$. Common rotation prediction objectives do not satisfy this property. Other works also study the use of exact equivariant objectives. Here, we take CARE~\cite{gupta2023structuring} as an example and compare it against rotation prediction with cross-entropy (CE) loss.

Table~\ref{tab:care} shows that training with exact equivariant objective (CARE) leads to further improvement in test accuracy (57.32\% $\rightarrow$ 64.50\%), which aligns with our observation of model equivariance in Section~\ref{sec:model-equivariance}. 
Altogether they suggest that enforcing exact feature equivariance (either through model architecture or feature regularization) can bring considerable benefits in downstream generalization. 

\section{Conclusion}
\label{sec:conclusion}
In this paper, we have provided a general theoretical understanding of how learning from seemingly irrelevant equivariance (such as, random rotations, masks and instance indices) can benefit downstream generalization in self-supervised learning. Leveraging the causal structure of data generation, we have discovered the explaining-away effect in equivariant learning. 
Based on this finding, we have established theoretical guarantees on how E-SSL extracts class-relevant features from an information-theoretic perspective. We also identify several key factors that influence the explaining-away.
Since this work is theory-oriented to fill the gap between practice and theory by investigating how E-SSL works, we do not explore extensively for a better E-SSL design. Nevertheless, the fruitful insights developed in this work could inspire more principled designs of E-SSL methods in future research.

\section*{Acknowledgement}
This research was supported in part by NSF AI Institute TILOS (NSF CCF-2112665), NSF award 2134108, and the Alexander von Humboldt Foundation. Yisen Wang was supported by National Key R\&D Program of China (2022ZD0160300), National Natural Science Foundation of China (92370129, 62376010), and Beijing Nova Program (20230484344, 20240484642).

\bibliographystyle{plainnat}
\bibliography{main.bib}

\newpage
\appendix
\section{Omitted Proofs}
\label{app:proofs}

\subsection{Proof of Proposition~\ref{prop:uselss-equivariance}}
\begin{proof}
It is easy to see that the linear encoder that takes the last $d'$ dimension of the input  does not rely on any class information while giving a perfect prediction of $A$, \ie $f(X)=X_{[d+1:d+d']}=A.$ Therefore, it gives random guess prediction on downstream classification.
\end{proof}

\subsection{Proof of Lemma~\ref{thm:explaining-away}}
\label{app:theorem-explain-way-proof}

\begin{proof}
We begin by restating an important result in probabilistic graphical models (PGMs) for conditional independence.
\begin{lemma}[Theorem 3.5 (rephrased)~\citep{koller2009pgm}]
For almost all distributions $P$ that factorize over the causal diagram $\gG$, that is, for all distributions except for a set of measure zero in the space of CPD (conditional probability distributions) parameterizations, we have that $I(P) = I(\gG)$, where $\mathcal{I}(\mathcal{G})$ denotes the set of independencies that correspond to $d$-separation:
$$
\mathcal{I}(\mathcal{G})=\left\{(\boldsymbol{X} \perp \boldsymbol{Y} \mid \boldsymbol{Z}): \operatorname{d-sep}_{\mathcal{G}}(\boldsymbol{X} ; \boldsymbol{Y} \mid \boldsymbol{Z})\right\}.
$$
\label{lemma:pgm}
\end{lemma}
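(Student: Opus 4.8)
The plan is to prove the two inclusions $\mathcal I(\gG)\subseteq\mathcal I(P)$ and $\mathcal I(P)\subseteq\mathcal I(\gG)$: the first holds for \emph{every} $P$ that factorizes over $\gG$, the second for all $P$ outside a null set of parameterizations, and together they give $\mathcal I(P)=\mathcal I(\gG)$ generically. The first inclusion is the standard \emph{soundness} of $d$-separation, which I take as known. All the work is in the reverse inclusion, which says a generic Bayesian-network distribution has \emph{no spurious} independencies. First I would fix the parameterization: write $\Theta=\prod_{i,\,\mathrm{pa}}\Delta^{d_i-1}$ for the product, over nodes $i$ and parent-configurations, of probability simplices, a full-dimensional compact polytope in some $\sR^N$. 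Every joint probability $P_\theta(x_1,\dots,x_n)=\prod_i\theta_i(x_i\mid\mathrm{pa}_i(x))$ is then a polynomial (a multilinear monomial) in $\theta$, and hence so is every marginal $P_\theta(\mathbf x,\mathbf z)$, being a sum of such monomials.

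Next, fix a triple of pairwise-disjoint variable sets $(\mathbf X,\mathbf Y,\mathbf Z)$ that is \emph{not} $d$-separated in $\gG$. The statement $\mathbf X\perp_{P_\theta}\mathbf Y\mid\mathbf Z$ is equivalent to the finite system of polynomial identities $P_\theta(\mathbf x,\mathbf y,\mathbf z)\,P_\theta(\mathbf z)=P_\theta(\mathbf x,\mathbf z)\,P_\theta(\mathbf y,\mathbf z)$ ranging over all value assignments $(\mathbf x,\mathbf y,\mathbf z)$ (a form valid even when $P_\theta(\mathbf z)=0$). Hence the bad set $V_{\mathbf X,\mathbf Y,\mathbf Z}=\{\theta\in\Theta:\mathbf X\perp_{P_\theta}\mathbf Y\mid\mathbf Z\}$ is the common zero locus in $\Theta$ of finitely many polynomials $f_{\mathbf x,\mathbf y,\mathbf z}$. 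Since $\Theta$ has nonempty interior and a real polynomial that vanishes on a positive-Lebesgue-measure subset of $\sR^N$ vanishes identically, $V_{\mathbf X,\mathbf Y,\mathbf Z}$ has positive measure only if every $f_{\mathbf x,\mathbf y,\mathbf z}\equiv 0$ on $\Theta$, i.e., only if $\mathbf X\perp\mathbf Y\mid\mathbf Z$ in \emph{every} distribution factorizing over $\gG$. So it suffices to exhibit a single parameterization $\theta^\star$ and an assignment witnessing $\mathbf X\not\perp_{P_{\theta^\star}}\mathbf Y\mid\mathbf Z$; then $V_{\mathbf X,\mathbf Y,\mathbf Z}$ is a proper algebraic subset of $\Theta$, hence Lebesgue-null. (The boundary $\partial\Theta$, where extra independencies could arise from zero CPD entries, is itself null and can be absorbed into the bad set.)

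To build $\theta^\star$ I would invoke the \emph{completeness} direction of the $d$-separation theorem. Because $(\mathbf X,\mathbf Y,\mathbf Z)$ is not $d$-separated, there is an active trail between some $X\in\mathbf X$ and $Y\in\mathbf Y$ given $\mathbf Z$. I would define $\theta^\star$ so that dependence is forced along this trail: make each non-collider on the trail an almost-deterministic copying function of the appropriate trail-neighbor; at each collider on the trail — which by activeness has the collider itself or one of its descendants in $\mathbf Z$ — choose the collider's CPD so that conditioning on it couples its two trail-parents (a parity/XOR-type CPD works); and set every variable off the trail to a constant, so it cannot wash out the signal. Propagating these choices and marginalizing over the unconditioned on-trail variables, one then checks that $P_{\theta^\star}(X=x,Y=y\mid\mathbf Z=\mathbf z)\neq P_{\theta^\star}(X=x\mid\mathbf Z=\mathbf z)\,P_{\theta^\star}(Y=y\mid\mathbf Z=\mathbf z)$ for a suitable $(x,y,\mathbf z)$.

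Finally I would take a union bound: a finite variable set admits only finitely many triples of pairwise-disjoint subsets, so the union of the null sets $V_{\mathbf X,\mathbf Y,\mathbf Z}$ over all triples that fail $d$-separation is still Lebesgue-null. For every $\theta$ in its complement, $\mathcal I(P_\theta)\subseteq\mathcal I(\gG)$, and with soundness this gives $\mathcal I(P_\theta)=\mathcal I(\gG)$, which is the claim. I expect the genuine obstacle to be the active-trail construction in the completeness step: organizing the CPDs along a trail that may contain several colliders, verifying that each collider's (or its $\mathbf Z$-descendant's) conditioning actually \emph{creates} rather than cancels the coupling, and confirming the dependence survives all required marginalizations; each check is elementary, but assembling them into a clean argument valid for an arbitrary active trail is the part needing care.
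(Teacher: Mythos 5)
The paper does not actually prove this lemma: it is imported verbatim by citation from \citet{koller2009pgm} (Theorem 3.5 there, which in turn rests on Meek's genericity/faithfulness argument), so there is no in-paper proof to compare against. Your sketch correctly reconstructs the standard literature proof: soundness of $d$-separation gives $\mathcal I(\gG)\subseteq\mathcal I(P)$ for every factorizing $P$; each candidate independence is a finite conjunction of polynomial identities $P_\theta(\mathbf x,\mathbf y,\mathbf z)P_\theta(\mathbf z)=P_\theta(\mathbf x,\mathbf z)P_\theta(\mathbf y,\mathbf z)$ in the CPD parameters; a real polynomial not identically zero has a Lebesgue-null zero set; completeness of $d$-separation supplies the single witness parameterization showing the polynomial is not identically zero for each non-$d$-separated triple; and a finite union bound finishes. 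Two small points deserve attention if you were to write this out in full. First, the witness construction at a collider whose \emph{descendant} (rather than the collider itself) lies in $\mathbf Z$ also requires the directed path from the collider down to that descendant to be made (near-)deterministic, so that conditioning on the descendant effectively conditions on the collider; your sketch glosses over this. Second, the product of simplices has empty interior and measure zero in the naive ambient coordinates, so the ``vanishes on a positive-measure set $\Rightarrow$ vanishes identically'' step needs the free (full-dimensional) parameterization of each simplex, which you do gesture at but should make explicit. Neither issue is a flaw in the approach; both are handled in the cited references.
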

Lemma~\ref{lemma:pgm} shows that almost all distributions that factorize over the causal diagram $\gG$ obey the $d$-separation rules. According to $d$-separation~\cite{geiger1990identifying,koller2009pgm}, the collider structure in Figure~\ref{fig:causal} implies that $A \not\!\perp\!\!\!\perp C | X$ and $A \not\!\perp\!\!\!\perp C | Z$. Further, recall that for any three random variables $A,B,C$, the mutual information $I(A;B|C)\geq0$, and $I(A;B|C)=0$ iff they are conditionally independent, \ie $A \perp B | C$. Combined the facts above, we will have $I(A;C|X)>0$ and $I(A;C|Z)>0$ almost surely.
\end{proof}

\subsection{Proof of Theorem~\ref{thm:representation-mi-inequality}}
\begin{proof}
Leveraging Lemma~\ref{thm:explaining-away}, we know that $I(A;C|Z)$ holds almost surely, which implies that 
\begin{equation}
    H(A|Z)>H(A|Z,C).
    \label{eq:entropy-inequality}
\end{equation}
Since $Z_C$ is a reparameterization of $C$, we have $H(A|Z,C)=H(A|Z,Z_C)=H(A|\tilde{Z})$. 
Subtracting $H(A)$ on both sides of \eqref{eq:entropy-inequality} gives $$H(A|Z)-H(A)<H(A|Z_I)-H(A),$$ 
which is equivalent to $I(A;\tilde{Z})>I(A;Z)$. Besides, we also have $I(C;\tilde{Z})=I(C;Z,Z_C)\geq I(C;Z)$, which completes the proof. 
\end{proof}

\subsection{Proof of Theorem~\ref{thm:additive-problem}}
\begin{proof}
First, let us consider the first proposition of Theorem~\ref{thm:additive-problem}.
We have
\begin{equation}
\begin{aligned}
I(A;C\mid X) &= I(A;C,X)-I(A;X) \\
&= I(A;C)+I(A;X\mid C)-I(A;X) \\
&= I(A;A+\lambda C\mid C)-I(A;A+\lambda C) \\
&= I(A;A)-I(A;A+\lambda C) \\
&= H(A)-H(A+\lambda C)+H(A+\lambda C\mid A) \\
&= H(A)-H(A+\lambda C)+H(\lambda C).
\end{aligned}
\end{equation}
Since $H(\lambda C)=H(C)$ and $H(A)$ and $H(C)$ are determined by $N_A$ and $N_C$, the goal is then transformed into minimizing $H(A+\lambda C)$. In order to address this problem, we first draw a table below to enumerate the possible outcome of $A+\lambda C$. To determine $H(A+\lambda C)$, we can divide the elements in the table into groups, where they are distributed to the same group if and only if they have the same value. Let us assume that $N_A \leq N_C$ because the opposite condition can be solved in a similar way.
\begin{table}[htbp]
    \centering
    \resizebox{\linewidth}{!}{
    \begin{tabular}{|c|c|c|c|c|c|}
      \hline
      $0$ & $1$ & $2$ & ... & $N_A-2$ & $N_A-1$ \\
      \hline
      $\lambda$ & $1+\lambda$ & $2+\lambda$ & ... & $N_A-2+\lambda$ & $N_A-1+\lambda$ \\
      \hline
      $2\lambda$ & $1+2\lambda$ & $2+2\lambda$ & ... & $N_A-2+2\lambda$ & $N_A-1+2\lambda$ \\
      \hline
      ... & ... & ... & ... & ... & ... \\
      \hline
      $(N_C-2)\lambda$ & $1+(N_C-2)\lambda$ & $2+(N_C-2)\lambda$ & ... & $N_A-2+(N_C-2)\lambda$ & $N_A-1+(N_C-2)\lambda$ \\
      \hline
      $(N_C-1)\lambda$ & $1+(N_C-1)\lambda$ & $2+(N_C-1)\lambda$ & ... & $N_A-2+(N_C-1)\lambda$ & $N_A-1+(N_C-1)\lambda$ \\
      \hline
    \end{tabular}
    }
\end{table}
  
Note that the number of elements in a single group cannot be greater than $N_A$, for at most one element in each column can be distributed to that group. Therefore, we denote $x_k$ as the number of groups that consist of k elements, $k \in \{1,2,...,N_A\}$. Our target is $H(A+\lambda C)=\sum_{t=1}^{N_A} x_t \frac{t}{N_A N_C} \ln{\frac{N_A N_C}{t}}$. For simplicity, we further denote $y_k=\frac{1}{N_A N_C} \ln{\frac{N_A N_C}{k}}$, then $H(A+\lambda C)=\sum_{t=1}^{N_A} t x_t y_t$. 

Considering the total number of elements, we have
\begin{equation}
    x_1+2x_2+...+N_A x_{N_A}=N_A N_C.
\end{equation}

We now pay attention to the top row and the rightmost column, where the elements are bound to be in different groups. Under closer observation, we find that the element $0$ forms a group alone, as does the element $N_A-1+(N_C-1)\lambda$. The element $1$ is in a group made up of at most two elements, as are the elements $\lambda$, $(N_A-1)+(N_C-2)\lambda$, $(N_A-2)+(N_C-1)\lambda$. Based on similar analysis, the following relationship can be deduced:
\begin{equation}
x_1 \geq 2, x_1+2x_2 \geq 6, ... , x_1+2x_2+...+(N_A-1)x_{N_A-1} \geq N_A(N_A-1).
\end{equation}
We also have
\begin{equation}
y_1>y_2>...>y_{N_A}.
\end{equation}
\begin{equation}
\begin{aligned}
H(A+\lambda C) &= \sum_{t=1}^{N_A} t x_t y_t \\
&= (N_A N_C-\sum_{t=1}^{N_A-1} t x_t)y_{N_A} + \sum_{t=1}^{N_A-1} t x_t y_t \\
&= N_A N_Cy_{N_A} + \sum_{t=1}^{N_A-1} t x_t (y_t-y_{N_A}) \\
&= N_A N_Cy_{N_A} + (y_{N_A-1}-y_{N_A})\sum_{t=1}^{N_A-1} t x_t + \\
& \quad (y_{N_A-2}-y_{N_A-1})\sum_{t=1}^{N_A-2} t x_t + ... + (y_2-y_3)(x_1+2x_2) + (y_1-y_2)x_1 \\
&\geq N_A N_Cy_{N_A} + N_A(N_A-1)(y_{N_A-1}-y_{N_A}) + \\
& \quad (N_A-1)(N_A-2)(y_{N_A-2}-y_{N_A-1}) + ... + 6(y_2-y_3) + 2(y_1-y_2) \\
&= N_A(N_C-N_A+1)y_{N_A} + 2\sum_{t=1}^{N_A-1} t y_t.
\end{aligned}
\end{equation}
The equality condition for this inequality is 
\begin{equation}
x_1 = x_2 = ... = x_{N_A-1} = 2.
\end{equation}
This indicates that every secondary diagonal (from upper right to lower left) in the aforementioned table forms a group of elements, which means $\lambda=1$. This completes the proof of the first claim.\\
\par

Let us further look at the second proposition. 
From the discussion above, we know that $\lambda=1$ is the optimal value, and we want to maximize $I(A,C|X) = H(A)+H(C)-H(A+C)$. Let us assume that $N_A \geq N_C$. In order to calculate in detail, we first list the probability distribution of $A+C$.
\begin{equation}
\begin{aligned}
&P(A+C=0)=\frac{1}{N_A N_C}, P(A+C=1)=\frac{2}{N_A N_C}, ..., P(A+C=N_C-2)=\frac{N_C-1}{N_A N_C}\\ 
&P(A+C=N_C-1)=\frac{1}{N_A}, P(A+C=N_C)=\frac{1}{N_A}, ...,P(A+C=N_A-1)=\frac{1}{N_A}, \\
&P(A+C=N_A)=\frac{N_C-1}{N_A N_C},  ..., P(A+C=N_C+N_A-2)=\frac{1}{N_A N_C}.
\end{aligned}
\end{equation}
Now, we have
\begin{equation}
\begin{aligned}
H(A+C)&=-2\sum_{i=1}^{N_C-1}(\frac{i}{N_A N_C} \ln{\frac{i}{N_A N_C}})-(N_A-N_C+1)\frac{1}{N_A} \ln{\frac{1}{N_A}}\\
&=2\sum_{i=1}^{N_C-1}[\frac{i}{N_A N_C} (\ln{N_A}+\ln{N_C}-\ln{i})]+\frac{N_A-N_C+1}{N_A} \ln{N_A}\\
&=\frac{N_C-1}{N_A}(\ln{N_A}+\ln{N_C})-\frac{2}{N_A N_C}\sum_{i=1}^{N_C-1}(i\ln{i})+\frac{N_A-N_C+1}{N_A}\ln{N_A}\\
&=\ln{N_A}+\frac{N_C-1}{N_A}\ln{N_C}-\frac{2}{N_A N_C}\sum_{i=1}^{N_C-1}(i\ln{i}).
\end{aligned}
\end{equation}
Thus,
\begin{equation}
\begin{aligned}
&H(A)+H(C)-H(A+C)\\
&=\frac{N_A-N_C+1}{N_A}\ln{N_C}+\frac{2}{N_A N_C}\sum_{i=1}^{N_C-1}(i\ln{i})\\
&\geq \frac{N_A-N_C+1}{N_A}\ln{N_C}+\frac{2}{N_A N_C}\int_{1}^{N_C-1} x\ln{x} \, dx\\
&= \frac{N_A-N_C+1}{N_A}\ln{N_C}+\frac{1}{N_A N_C}[(N_C-1)^{2}\ln{(N_C-1)}-\frac{(N_C-1)^2-1}{2}]\\
&= \ln{N_C}-\frac{1}{N_A}[(N_C-1)\ln{N_C}-\frac{(N_C-1)^2}{N_C}\ln{(N_C-1)}+\frac{N_C-2}{2}].
\end{aligned}
\end{equation}
Note that $(N_C-1)\ln{N_C}-\frac{(N_C-1)^2}{N_C}\ln{(N_C-1)}+\frac{N_C-2}{2}$ is greater than $0$. Therefore, the lower bound of $I(A,C|X)$ is monotonically non-decreasing with respect to $N_A$. If $N_A$ is adequately large, $I(A,C|X)$ approximates $\ln{N_C}$.
\end{proof}

\subsection{Proof of Theorem~\ref{thm:lower-bound}}
    \begin{proof}
    The lower bound can be easily derived by taking the difference between the two quantities:
    \begin{align}
    I(Z;C)-I(Z;A) 
    \geq& I(Z;C;A)-I(Z;A) \\
    =&-I(Z;A|C) \\ 
    \geq& -I(X;A|C),
    \end{align}
    where the last line comes from the information processing inequality.
    \end{proof}

\section{Experiment Details}
\label{app:details}
In this section, we detail the setting of each individual experiment in this work.
All experiments are conducted with a single NVIDIA RTX 3090 GPU.

\subsection{Experiment Details of Different Equivariant Pretraining Tasks}
In this experiment, we conduct equivariant pretraining tasks based on seven different types of transformations. In order to maintain fairness and avoid cross-interactions, we only apply random crops to the raw images before we move on to these tasks. We adopt ResNet-18 as the backbone with a two-layer MLP that has a hidden dimension of 2048 and an output dimension corresponding to the pretraining tasks. Under each transformation, we train the model for 200 epochs on CIFAR-10, with batch size 512 and weight decay $10^{-6}$. The detailed pretraining tasks are listed as follows.

\textbf{Horizontal Flip \& Vertical Flip \& Color Inversion \& Grayscale.}
We randomly (\ie with probability 0.5) apply the specific transformation to images and require the model to predict whether or not we have really done the transformation. In these cases, the output dimension is 2.

\textbf{Four-fold Rotation.}
We rotate the images with equal probability (\ie with probability 0.25) by 0°, 90°, 180°, and 270° and require the model to predict which rotation angle we have actually adopted. In this case, the output dimension is 4.

\textbf{Four-fold Blur.}
We apply Gaussian blurs to the images using kernel sizes of 0, 5, 9, and 15, where kernel size 0 refers to not applying Gaussian blurs. We then require the model to predict the kernel size. In this case, the output dimension is 4.

\textbf{Jigsaw.}
We divide the images into $2 \times 2$ patches, randomly shuffle their order, and then require the model to predict the original arrangement. In this case, the output dimension is 24, since there are $4!=24$ possible permutations for the shuffled arrangements. 

During the pretraining tasks, we simultaneously train a classifier, which is a single-layer linear head and is trained without affecting the rest of the network. Apart from the seven tasks, we also conduct a baseline experiment, where we fix a random encoder and optimize the classifier alone in order to assess the effectiveness of these pretraining tasks.

\subsection{Experiment Details of How Class Information Affects Equivariant Pretraining Tasks}

In this experiment, our goal is to figure out how class information affects rotation prediction. Figure~\ref{fig:model of experiment B.2} demonstrates the outline of the model we use to conduct this experiment. We apply random crops with size 32 and horizontal flips with probability 0.5 to the raw images.

\textbf{Training objectives.} As for the experiment process, we first use rotation prediction as the pretraining task with a cross-entropy loss between our predicted angles and the actual angles, defined as 
\begin{equation}
\gL_{rot} = -\frac{1}{N}\sum_{i=1}^{N} \sum_{j=1}^{4} p_{ij} \log(\hat{p}_{ij}),
\end{equation}
where N is the image number, the one-hot vector $p_{ij}$ refers to the true rotation angle of the $i^{th}$ image, and $\hat{p}_{ij}$ refers to the prediction of the model. 
In the case where class information is incorporated, we simply add to the original loss function the cross-entropy between the classes predicted by the classifier and their corresponding ground truth labels, defined as 
\begin{equation}
\gL_{cls} = -\frac{1}{N}\sum_{i=1}^{N} \sum_{j=1}^{C} y_{ij} \log(\hat{y}_{ij}), 
\end{equation}
where N is the image number, C is the class number, the one-hot vector $y_{ij}$ refers to the true class of the $i^{th}$ image, and $\hat{y}_{ij}$ refers to the prediction of the model. 
In other words, when class information is injected, the loss function is $\gL_{rot} + \lambda_1 \gL_{cls}$, where the mixing coefficient $\lambda_1$ is a hyper-parameter. Furthermore, to eliminate class information from the first setting, we minimize the classifier loss with $\gL_{cls}$, trying to probe class information in the representation; in the meantime, we optimize the encoder to maximize the classification loss, aiming to eliminate any class information that can be found by the classifier. In particular, we adopt a joint training objective for the encoder as $\gL_{rot} - \lambda_2 \gL_{cls}$, where the mixing coefficient $\lambda_2$ is also a hyper-parameter. This leads to a \emph{min-max optimization} between the encoder and the linear classifier. We choose $\lambda_1=0.5$ and $\lambda_2=9$, under which the class features can be shown to benefit or harm rotation prediction.

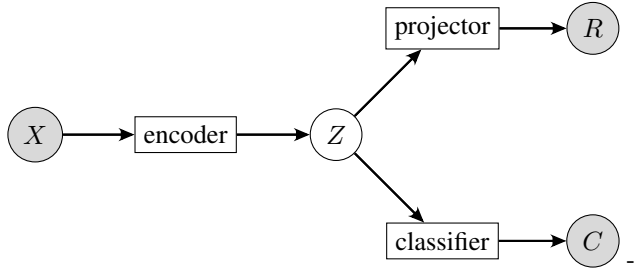
\begin{figure}[t]

  \centering
    \centering
    \begin{tikzpicture}[node distance=2cm, every node/.style={draw, circle, align=center}]

  \node [fill=gray!30] (X) {$X$};
  \node [rectangle, right of=X, node distance=2cm] (encoder) {encoder};
  \node [right of=encoder, node distance=2cm] (Z) {$Z$};
  \node [rectangle, above right of=Z, node distance=2cm] (projector) {projector};
  \node [rectangle, below right of=Z, node distance=2cm] (classifier) {classifier};
  \node [right of=projector, node distance=2cm, fill=gray!30] (R) {$R$};
  \node [right of=classifier, node distance=2cm, fill=gray!30] (C) {$C$};

  \draw[bold arrow] (X) -- (encoder);
  \draw[bold arrow] (encoder) -- (Z);
  \draw[bold arrow] (Z) -- (projector);
  \draw[bold arrow] (projector) -- (R);
  \draw[bold arrow] (Z) -- (classifier);
  \draw[bold arrow] (classifier) -- (C);

\end{tikzpicture}
-\caption{The model of this experiment. $X$: raw input; $Z$: representation; $R$: rotation prediction; $C$: class prediction. For rotation prediction, unless specified, the gradient flowing from the classifier to the encoder is detached.
}
\label{fig:model of experiment B.2}
\end{figure}

In the pretraining process, we mainly use Resnet-18 as the backbone with a two-layer MLP that has a hidden dimension of 2048 and an output dimension of 4, and a single-layer linear head as a classifier. For each setting, we train the model for 200 epochs on CIFAR-10 and CIFAR-100 respectively with batch size 512 and weight decay $10^{-6}$. Additionally, we train the model for 100 epochs on Tiny-ImageNet-200 with batch size 256 and weight decay $10^{-6}$. To further explore different backbones, we also use Resnet-50 and train the model for 200 epochs on CIFAR-10 with batch size 256 and weight decay $10^{-6}$.

Furthermore, we are interested the effects of class information on the accuracy of the pretraining tasks based on other transformations such as horizontal flips and four-fold blurs that are regarded as intrinsically hard. We simply inject class information into the representation by adding a classification loss to the original loss as well. Slightly different from the operation in rotation prediction, we apply random resized crops before conducting the pretraining tasks to avoid interfering with the prediction in these tasks. The other details and parameters are the same as those in rotation prediction.

\subsection{Experiment Details in the study of model equivariance}
In order to compare the performance of Resnet and EqResnet, we use rotation prediction as our pretraining task and obtain the linear probing results. We apply various augmentations to the raw images, such as no augmentation, a combination of random crop with size 32 and horizontal flip, and SimCLR augmentation with output size 32. To be more specific, SimCLR augmentation refers to a sequence of transformations, including random resized crop with size 32, horizontal flip with probability 0.5, color jitter with probability 0.8, and finally grayscale with probability 0.2.

In these experiments, we predict rotation angles with a two-layer MLP that has a hidden dimension of 2048 and an output dimension of 4, and a single-layer linear head as a classifier. For each setting, we train the model for 200 epochs on CIFAR-10 and CIFAR-100 with batch size 128 and weight decay $5 \times 10^{-4}$. The results on CIFAR-100 are displayed in Table~\ref{tab:equivariant-cifar100}.

\begin{table}[t]\centering
\setlength{\tabcolsep}{3pt}
    \caption{Training rotation prediction accuracy and test linear classification accuracy under different base augmentations (CIFAR-100, ResNet18).}
    \label{tab:equivariant-cifar100}
    \begin{tabular}{lllccc}
        \toprule 
    Dataset & Augmentation & Network & Train Rotation ACC & Test Classification ACC & Gain \\ \midrule
    \multirow{6}{*}{CIFAR-100} & \multirow{2}{*}{None} & ResNet18 & 99.83 & 11.31 & \\
    & & EqResNet18 & \textbf{100.00} & \textbf{32.38} & \green{\textbf{+21.07}} \\
    & \multirow{2}{*}{Crop\&Flip} & ResNet18 & {90.94} & {13.19}&  \\
    & & EqResNet18 & \textbf{99.88} & \textbf{49.47}& \green{\textbf{+36.28}} \\
    & \multirow{2}{*}{SimCLR} & ResNet18 & 68.29 & 10.65 & \\
    & & EqResNet18 & \textbf{82.69} &	\textbf{37.11} & \green{\textbf{+26.46}} \\
    \bottomrule
    \end{tabular}
\end{table}

\section{\texorpdfstring{$\gV$}{V}-information: Background and Extensions}
\label{app:v-information}
In this section, we introduce $\gV$-information \cite{v-information}, which is a computation-aware and model-aware extension of Shannon's notation that is more suitable for modeling neural representation learning. Then, we extend our theory and show that the main results still hold under $\gV$-information.

\subsection{Definitions and Properties of \texorpdfstring{$\gV$}{V}-information}
$\gV$-information is proposed by \citet{v-information} under the consideration of computational constraints, which happens to be one of the drawbacks of traditional Shannon information theory. An additional merit of $\gV$-information is that it can be estimated from high-dimensional data. The formal definition of $\gV$-information is derived as follows. Denote $Y$ as the target random variable that the model is trying to predict and $X$ as another random variable that provides side information for the prediction of $Y$. Let $\gX$ and $\gY$ be the sample spaces of $X$ and $Y$. Define $\Omega := \{f:\gX \cup \emptyset \to \gP(\gY)\}$ as a set of the functions that maps $X$ to a family of probability distributions over $Y$.

\begin{definition}[Predictive Family]
$\gV \subseteq \Omega$ is called a predictive family if $\forall f \in \gV$, $\forall P \in range(f)$, $\exists f' \in \gV$ that satisfies $\forall x \in \gX, f'[x]=P, f'[\emptyset]=P$.
\end{definition}
In other words, a predictive family is a set of probability measures that are allowed to be used under computational constraints. The existence of $f'$ indicates that the agent can optionally ignore the side information.

Next, we introduce predictive conditional $\gV$-entropy and predictive $\gV$-information.
\begin{definition}[Predictive Conditional $\gV$-entropy]
$H_{\gV}(Y|X)=inf_{f\in {\gV}} \mathbb{E}_{x,y \sim X,Y} [-\log f[x](y)]$. Specifically, $H_{\gV}(Y|\emptyset)=inf_{f\in {\gV}} \mathbb{E}_{y \sim Y} [-\log f[\emptyset](y)]$.
\end{definition}

\begin{definition}[Predictive $\gV$-information]
$I_{\gV}(X \to Y)=H_{\gV}(Y|\emptyset)-H_{\gV}(Y|X)$.
\end{definition}

Apart from the definitions, we have to highlight an important property of predictive $\gV$-information.
\begin{lemma}[\citet{v-information}]
$I_{\gV}(A \to B) = 0$ iff A and B are independent variables.
\label{lemma:independence-v}
\end{lemma}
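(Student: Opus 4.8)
The plan is to reduce the biconditional to a comparison of the two predictive conditional entropies $H_\gV(B\mid\emptyset)$ and $H_\gV(B\mid A)$, since by definition $I_\gV(A\to B)=0$ holds exactly when $H_\gV(B\mid A)=H_\gV(B\mid\emptyset)$. First I would record the one inequality that is true with no hypothesis, namely $H_\gV(B\mid A)\le H_\gV(B\mid\emptyset)$ (equivalently $I_\gV(A\to B)\ge 0$): given any $f\in\gV$, the marginal predictor $f[\emptyset]$ lies in $\mathrm{range}(f)$, so the predictive-family axiom furnishes $g\in\gV$ with $g[x]=f[\emptyset]$ for every $x$; feeding $g$ into the conditional log-loss gives $\mathbb{E}_{(a,b)}[-\log g[a](b)]=\mathbb{E}_{b}[-\log f[\emptyset](b)]$, and taking the infimum over $f$ yields the inequality. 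It then remains to identify when it is tight.

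For the direction ``$A\perp B\Rightarrow I_\gV(A\to B)=0$'' I would establish the reverse inequality $H_\gV(B\mid A)\ge H_\gV(B\mid\emptyset)$. Fix $f\in\gV$; by the tower rule and then independence ($P_{B\mid A=a}=P_B$),
\[
\mathbb{E}_{(a,b)}\big[-\log f[a](b)\big]=\mathbb{E}_{a}\,\mathbb{E}_{b\sim P_B}\big[-\log f[a](b)\big].
\]
For each fixed $a$, apply the predictive-family axiom to $P=f[a]\in\mathrm{range}(f)$ to obtain $g_a\in\gV$ with $g_a[\emptyset]=f[a]$, whence $\mathbb{E}_{b\sim P_B}[-\log f[a](b)]=\mathbb{E}_{b\sim P_B}[-\log g_a[\emptyset](b)]\ge H_\gV(B\mid\emptyset)$; averaging over $a$ and taking the infimum over $f$ gives $H_\gV(B\mid A)\ge H_\gV(B\mid\emptyset)$, so equality holds and $I_\gV(A\to B)=0$.

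The converse ``$I_\gV(A\to B)=0\Rightarrow A\perp B$'' is the delicate direction and the step I expect to be the main obstacle. For the unrestricted family $\gV=\Omega$ it is immediate, because then $I_\gV$ collapses to the Shannon mutual information $I(A;B)$ and $I(A;B)=0\Leftrightarrow A\perp B$ is classical. For a general predictive family the implication need not hold literally — if $\gV$ consists only of constant predictors then $I_\gV\equiv 0$ irrespective of any dependence — so I would prove it under the expressiveness assumption on $\gV$ that is in force in \citet{v-information}: roughly, that whenever $A$ and $B$ are dependent some $f\in\gV$ can exploit the value of $A$ to strictly beat the best marginal predictor of $B$ in expected log-loss. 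Under that assumption the contrapositive closes the argument: $A\not\perp B$ produces such an $f$, which witnesses $H_\gV(B\mid A)<H_\gV(B\mid\emptyset)$, i.e. $I_\gV(A\to B)>0$. Pinning down the correct hypothesis on $\gV$ and verifying that it is the one under which the biconditional is intended is exactly where the proof goes beyond the bare predictive-family axiom used in the other direction.
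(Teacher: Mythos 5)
The paper does not actually prove this lemma---it is imported by citation from \citet{v-information}---so there is no internal proof to compare against. Judged on its own terms, your treatment of the easy half is correct and matches the standard argument: non-negativity $I_\gV(A\to B)\ge 0$ via the optional-ignorance axiom, and, under independence, the reverse inequality $H_\gV(B\mid A)\ge H_\gV(B\mid\emptyset)$ by conditioning on $a$, replacing $f[a]$ by a predictor $g_a$ with $g_a[\emptyset]=f[a]$, and averaging. That is exactly how the cited source establishes the ``independence $\Rightarrow$ zero $\gV$-information'' property.

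The genuine gap is the converse, and your handling of it does not close it. You correctly observe that for a general predictive family the implication $I_\gV(A\to B)=0\Rightarrow A\perp B$ is false (a family of constant predictors satisfies the optional-ignorance axiom and yields $I_\gV\equiv 0$ regardless of dependence), but the ``expressiveness assumption'' you then invoke---whenever $A$ and $B$ are dependent, some $f\in\gV$ strictly beats the best marginal predictor---is precisely the contrapositive of the claim to be proved, so the argument is circular rather than a proof under an identifiable hypothesis (e.g.\ $\gV=\Omega$, where $I_\gV$ reduces to Shannon mutual information and the classical fact applies). Note also that \citet{v-information} themselves only state the one direction you proved; the ``iff'' as written in Lemma~\ref{lemma:independence-v} is stronger than the cited result, and the missing direction is the one actually used in Theorem~\ref{thm:explaining-away-v} (dependence given $X$ or $Z$ forces $I_\gV>0$), so it tacitly requires $\gV$ to be rich enough to detect the dependence---a hypothesis that should be made explicit rather than assumed in the form of the conclusion.
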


\subsection{Extension to \texorpdfstring{$\gV$}{V}-information}
First, we present the $\gV$-information version of Lemma~\ref{thm:explaining-away}.

\begin{theorem}[Explaining-away in E-SSL]
If the data generation process obeys the diagram in Figure~\ref{fig:causal}, then almost surely, $A$ and $C$ is no dependent given $X$ or $Z$, \ie $A \not\!\perp\!\!\!\perp C | X$ and $A \not\!\perp\!\!\!\perp C | Z$. It implies that $I_{\gV}(C \to A|X)>0$ and $I_{\gV}(C \to A|Z)>0$ hold almost surely.
\label{thm:explaining-away-v}
\end{theorem}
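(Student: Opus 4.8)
The plan is to mirror the proof of Lemma~\ref{thm:explaining-away} verbatim for its structural part and to replace only the final implication ``non-independence $\Rightarrow$ positive information'' by its $\gV$-information counterpart, Lemma~\ref{lemma:independence-v}. The structural part does not depend on which information measure one uses: by Lemma~\ref{lemma:pgm}, for every distribution factorizing over the causal diagram in Figure~\ref{fig:causal} outside a measure-zero set of CPD parameterizations, the conditional independencies that hold are exactly those obtained by $d$-separation. Since $X$ is a collider on the path $C \to \bar X \to X \leftarrow A$ and $Z$ is a descendant of that collider, conditioning on $X$ (resp. on $Z$) activates the path, so $A$ and $C$ are not $d$-separated given $X$, nor given $Z$. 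Hence $A \not\!\perp\!\!\!\perp C \mid X$ and $A \not\!\perp\!\!\!\perp C \mid Z$ hold almost surely, which is the first assertion.

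Next I would convert dependence into strict positivity of $\gV$-information. Define the conditional predictive $\gV$-information in the natural way, $I_\gV(C \to A \mid X) := H_\gV(A \mid X) - H_\gV(A \mid X, C)$, and likewise with $Z$ replacing $X$. The predictive-family property — every $f \in \gV$ admits a companion $f'$ that ignores its argument — implies $I_\gV(C \to A \mid X) \geq 0$, since enlarging the side information from $X$ to $(X,C)$ can only enlarge the set of usable predictors and hence cannot raise the infimum defining the conditional $\gV$-entropy. To rule out equality I invoke Lemma~\ref{lemma:independence-v}: $\gV$-information between two variables vanishes iff they are independent. Applied in the conditioned model, the dependence $A \not\!\perp\!\!\!\perp C \mid X$ established above forces $I_\gV(C \to A \mid X) > 0$, and identically $I_\gV(C \to A \mid Z) > 0$; together with the ``almost surely'' from the first part this yields the theorem.

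The step I expect to demand the most care is precisely this last one, because Lemma~\ref{lemma:independence-v} is stated for \emph{unconditional} $\gV$-information whereas here we need it conditioned on $X$ (resp. $Z$). The clean route is to disintegrate over the values $x$ of $X$: on an event of positive probability the conditional law $P(A,C \mid X = x)$ is not a product measure, so by Lemma~\ref{lemma:independence-v} applied to that conditional model there is a strictly positive gap between the best conditional log-loss of an $A$-predictor that uses $C$ and one that ignores it; integrating this strict gap against the law of $X$ gives the strict inequality. This reduction, however, is only legitimate if the predictive family $\gV$ is rich enough that $H_\gV(A\mid X) = \E_{x}\,H_\gV(A \mid X = x)$, i.e. that one may choose a (possibly) different predictor for each value of $X$; this holds for the function classes typically used in practice but should be recorded as a standing assumption on $\gV$ (or verified directly for the $\gV$ at hand). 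Everything else — nonnegativity bookkeeping and the passage from ``for almost all parameterizations'' to ``almost surely'' — transfers without change from the Shannon proof.
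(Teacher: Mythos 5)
Your proposal follows essentially the same route as the paper's own proof: Lemma~\ref{lemma:pgm} plus the collider structure gives $A \not\!\perp\!\!\!\perp C \mid X$ and $A \not\!\perp\!\!\!\perp C \mid Z$ for almost all parameterizations, and Lemma~\ref{lemma:independence-v} then converts this conditional dependence into $I_{\gV}(C\to A\mid X)>0$ and $I_{\gV}(C\to A\mid Z)>0$. If anything, you are more careful than the paper, which invokes Lemma~\ref{lemma:independence-v} in conditional form without comment; your disintegration over the values of $X$ and the explicit richness assumption $H_\gV(A\mid X)=\E_{x}H_\gV(A\mid X=x)$ spell out a step the paper simply asserts.
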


\begin{proof}
Lemma~\ref{lemma:independence-v} indicates that for any three random variables $A,B,C$, the inequality $I_{\gV}(A \to B|C) \geq 0$ always holds and that $I_{\gV}(A \to B|C) = 0$ iff $A \perp B | C$. Based on the analysis of the collider structure in Appendix~\ref{app:theorem-explain-way-proof}, we know that A and C are not independent given either $X$ or $Z$. Thus, we have $I_{\gV}(C \to A|X)>0$ and $I_{\gV}(C \to A|Z)>0$ almost surely.
\end{proof}

Then, we present the $\gV$-information version of 
Theorem~\ref{thm:representation-mi-inequality}.

\begin{theorem}
Assume that the representation $Z$ consists of two parts $Z=[Z_I,Z_C]$, where $Z_I$ is class-irrelevant, and $Z_C=\phi(C)$ is a representation of the class $C$ with an invertible mapping $\phi$. If there is a positive synegy effect $I_{\gV}(C\to A|Z_I)>0$, we will have $I_\gV(Z_I\to A)<I_\gV(Z\to A)$, showing that with class features $Z_C$ we can attain strictly better equivariant prediction. As a consequence, the optimal features of equivairant learning will contain class features.
\label{thm:representation-mi-inequality-v}
\end{theorem}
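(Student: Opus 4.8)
The plan is to mirror the proof of Theorem~\ref{thm:representation-mi-inequality}, replacing Shannon entropies with their $\gV$-information analogues and using the $\gV$-information version of explaining-away (Theorem~\ref{thm:explaining-away-v}) in place of Lemma~\ref{thm:explaining-away}. First I would unpack the definition of $\gV$-information: $I_\gV(Z_I\to A)=H_\gV(A\mid\emptyset)-H_\gV(A\mid Z_I)$ and $I_\gV(Z\to A)=H_\gV(A\mid\emptyset)-H_\gV(A\mid Z)$, so that the claimed strict inequality $I_\gV(Z_I\to A)<I_\gV(Z\to A)$ is equivalent to the strict inequality $H_\gV(A\mid Z)<H_\gV(A\mid Z_I)$ between predictive conditional $\gV$-entropies. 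Thus the whole statement reduces to showing that appending the class feature $Z_C$ to $Z_I$ strictly lowers the predictive uncertainty about $A$.

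Next I would exploit the hypothesis $I_\gV(C\to A\mid Z_I)>0$. Writing this out, $I_\gV(C\to A\mid Z_I)=H_\gV(A\mid Z_I)-H_\gV(A\mid Z_I,C)>0$, hence $H_\gV(A\mid Z_I,C)<H_\gV(A\mid Z_I)$. Since $Z_C=\phi(C)$ with $\phi$ invertible, conditioning on $Z_C$ is the same as conditioning on $C$ for any member of a reasonable predictive family (a predictor that uses $\phi(C)$ can recover $C$, and conversely), so $H_\gV(A\mid Z_I,Z_C)=H_\gV(A\mid Z_I,C)$. Because $Z=[Z_I,Z_C]$, we get $H_\gV(A\mid Z)=H_\gV(A\mid Z_I,Z_C)=H_\gV(A\mid Z_I,C)<H_\gV(A\mid Z_I)$, which is exactly the inequality we needed. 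Combining with the definitions in the previous paragraph yields $I_\gV(Z_I\to A)<I_\gV(Z\to A)$.

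For the existence of the positive synergy $I_\gV(C\to A\mid Z_I)>0$ (so that the hypothesis is not vacuous), one can invoke Theorem~\ref{thm:explaining-away-v}, which gives $I_\gV(C\to A\mid Z)>0$ almost surely from the collider structure; a parallel argument with $Z$ replaced by the class-irrelevant component $Z_I$ (which is still a descendant of the collider $X$ in the relevant sense) gives the stated condition. Finally, for the concluding sentence — that the optimal equivariant representation contains class features — I would argue by contradiction: if an optimal $Z$ carried no class information, then decomposing it as its own $Z_I$ with an empty $Z_C$, the strict inequality just proved shows that augmenting it with $Z_C=\phi(C)$ strictly increases $I_\gV(Z\to A)$, contradicting optimality for the E-SSL objective $\max I_\gV(Z\to A)$.

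The main obstacle I anticipate is the step $H_\gV(A\mid Z_I,Z_C)=H_\gV(A\mid Z_I,C)$: unlike Shannon entropy, $\gV$-entropy is sensitive to the computational form in which information is presented, so ``$\phi$ invertible'' alone does not automatically license swapping $C$ for $\phi(C)$ inside a predictive-conditional $\gV$-entropy — one needs the predictive family $\gV$ to be closed under composition with $\phi^{-1}$ (or, equivalently, to be rich enough to invert $\phi$). I would handle this by stating it as a mild closure assumption on $\gV$ (natural for neural-network hypothesis classes, which can represent $\phi^{-1}$), analogous to the invertibility assumption already made in Theorem~\ref{thm:representation-mi-inequality}. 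The rest is essentially bookkeeping with the definitions of predictive conditional $\gV$-entropy and $\gV$-information.
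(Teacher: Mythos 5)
Your proposal is correct and follows essentially the same route as the paper's proof: unpack $I_\gV(C\to A\mid Z_I)>0$ as $H_\gV(A\mid Z_I,C)<H_\gV(A\mid Z_I)$, use the invertibility of $\phi$ to identify $H_\gV(A\mid Z_I,C)$ with $H_\gV(A\mid Z_I,Z_C)=H_\gV(A\mid Z)$, and translate back to $\gV$-information. Your caveat that swapping $C$ for $\phi(C)$ inside a $\gV$-entropy implicitly requires the predictive family to be rich enough to absorb $\phi^{-1}$ is a fair refinement that the paper's proof leaves tacit, but it does not change the argument.
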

\begin{proof}
We have the assumption $I_{\gV}(C \to A|Z_I) = H_{\gV}(A|Z_I)-H_{\gV}(A|C,Z_I)>0$. Given that the function $\phi$ is invertible and $Z_C=\phi(C)$, we have $H_{\gV}(A|C,Z_I)=H_{\gV}(A|Z_C,Z_I)=H_{\gV}(A|Z)<H_{\gV}(A|Z_I)$. Subtracting $H_{\gV}(A)$ from both sides and rewriting the inequality, we finally obtain $I_{\gV}(Z \to A) > I_{\gV}(Z_I \to A)$.
\end{proof}

\section{Detailed Elaboration on Computing \texorpdfstring{$I(A;C|X)$}{I(A;C|X)}}
The computation of $I(A;C|X)$ requires knowledge of both the pretraining label $A$ and the class label $C$. When both are available, we can compute it following its decomposition $I(A;C|X)=H(A|X)-H(A|X,C)$. Estimating entropy in high-dimensional space is generally hard, and a common approach is to leverage variational estimates for the two entropy terms $H(A|X)$ and $H(A|X,C)$ using neural networks.

\textbf{Variational estimates of entropy}. Take $H(A|X)$ as an example. We can learn a NN classifier $P_{\theta}(A|X)$ optimized by minimizing the cross-entropy $L(\theta)=-\mathbb{E}_{P_d(X,Y)} P_{\theta}(A|X)$. The following inequality shows that CE loss is the variational upper bound of $H(A|X)$. When $\theta$ is minimized at the minimum with $P_{\theta}(A|X)=P_d(A|X)$, we have $L(\theta)=H(A|X)$ as the perfect estimate. Since NNs are generally expressive approximators, we believe that the (converged) CE loss can serve as a good estimate for $H(A|X)$.

\textbf{Variational estimates of $I(A;C|X)$}. Combing these two, we notice that a good estimate for $I(A;C|X)$ is the difference between the cross entropy losses of the predictor $P_{\theta}(A|X)$ and $P_{\theta}(A|X,C)$, which is exactly what we studied in the verification experiment in Section \ref{sec:explaining-away}. In other words, the gap  of the CE losses (or similarly the accuracies) between “rotation” and “rotation+cls”, can serve as a quantitative measure of the synergy effect $I(A;C|X)$. A more computationally friendly way is to select only a small subset of samples and to train it shortly. As shown in Figure \ref{fig:synergy}, the gap is already exhibited at the early stage of training.

\newpage
\section*{NeurIPS Paper Checklist}

\begin{enumerate}

\item {\bf Claims}
    \item[] Question: Do the main claims made in the abstract and introduction accurately reflect the paper's contributions and scope?
    \item[] Answer: \answerYes{}
    \item[] Justification: We support each point by either theory or experiment.
    \item[] Guidelines:
    \begin{itemize}
        \item The answer NA means that the abstract and introduction do not include the claims made in the paper.
        \item The abstract and/or introduction should clearly state the claims made, including the contributions made in the paper and important assumptions and limitations. A No or NA answer to this question will not be perceived well by the reviewers. 
        \item The claims made should match theoretical and experimental results, and reflect how much the results can be expected to generalize to other settings. 
        \item It is fine to include aspirational goals as motivation as long as it is clear that these goals are not attained by the paper. 
    \end{itemize}

\item {\bf Limitations}
    \item[] Question: Does the paper discuss the limitations of the work performed by the authors?
    \item[] Answer: \answerYes{}
    \item[] Justification: See Section~\ref{sec:conclusion}.
    \item[] Guidelines:
    \begin{itemize}
        \item The answer NA means that the paper has no limitation while the answer No means that the paper has limitations, but those are not discussed in the paper. 
        \item The authors are encouraged to create a separate "Limitations" section in their paper.
        \item The paper should point out any strong assumptions and how robust the results are to violations of these assumptions (e.g., independence assumptions, noiseless settings, model well-specification, asymptotic approximations only holding locally). The authors should reflect on how these assumptions might be violated in practice and what the implications would be.
        \item The authors should reflect on the scope of the claims made, e.g., if the approach was only tested on a few datasets or with a few runs. In general, empirical results often depend on implicit assumptions, which should be articulated.
        \item The authors should reflect on the factors that influence the performance of the approach. For example, a facial recognition algorithm may perform poorly when image resolution is low or images are taken in low lighting. Or a speech-to-text system might not be used reliably to provide closed captions for online lectures because it fails to handle technical jargon.
        \item The authors should discuss the computational efficiency of the proposed algorithms and how they scale with dataset size.
        \item If applicable, the authors should discuss possible limitations of their approach to address problems of privacy and fairness.
        \item While the authors might fear that complete honesty about limitations might be used by reviewers as grounds for rejection, a worse outcome might be that reviewers discover limitations that aren't acknowledged in the paper. The authors should use their best judgment and recognize that individual actions in favor of transparency play an important role in developing norms that preserve the integrity of the community. Reviewers will be specifically instructed to not penalize honesty concerning limitations.
    \end{itemize}

\item {\bf Theory Assumptions and Proofs}
    \item[] Question: For each theoretical result, does the paper provide the full set of assumptions and a complete (and correct) proof?
    \item[] Answer: \answerYes{}
    \item[] Justification: See Section~\ref{app:proofs}.
    \item[] Guidelines:
    \begin{itemize}
        \item The answer NA means that the paper does not include theoretical results. 
        \item All the theorems, formulas, and proofs in the paper should be numbered and cross-referenced.
        \item All assumptions should be clearly stated or referenced in the statement of any theorems.
        \item The proofs can either appear in the main paper or the supplemental material, but if they appear in the supplemental material, the authors are encouraged to provide a short proof sketch to provide intuition. 
        \item Inversely, any informal proof provided in the core of the paper should be complemented by formal proofs provided in appendix or supplemental material.
        \item Theorems and Lemmas that the proof relies upon should be properly referenced. 
    \end{itemize}

    \item {\bf Experimental Result Reproducibility}
    \item[] Question: Does the paper fully disclose all the information needed to reproduce the main experimental results of the paper to the extent that it affects the main claims and/or conclusions of the paper (regardless of whether the code and data are provided or not)?
    \item[] Answer: \answerYes{}
    \item[] Justification: See Appendix~\ref{app:details}.
    \item[] Guidelines:
    \begin{itemize}
        \item The answer NA means that the paper does not include experiments.
        \item If the paper includes experiments, a No answer to this question will not be perceived well by the reviewers: Making the paper reproducible is important, regardless of whether the code and data are provided or not.
        \item If the contribution is a dataset and/or model, the authors should describe the steps taken to make their results reproducible or verifiable. 
        \item Depending on the contribution, reproducibility can be accomplished in various ways. For example, if the contribution is a novel architecture, describing the architecture fully might suffice, or if the contribution is a specific model and empirical evaluation, it may be necessary to either make it possible for others to replicate the model with the same dataset, or provide access to the model. In general. releasing code and data is often one good way to accomplish this, but reproducibility can also be provided via detailed instructions for how to replicate the results, access to a hosted model (e.g., in the case of a large language model), releasing of a model checkpoint, or other means that are appropriate to the research performed.
        \item While NeurIPS does not require releasing code, the conference does require all submissions to provide some reasonable avenue for reproducibility, which may depend on the nature of the contribution. For example
        \begin{enumerate}
            \item If the contribution is primarily a new algorithm, the paper should make it clear how to reproduce that algorithm.
            \item If the contribution is primarily a new model architecture, the paper should describe the architecture clearly and fully.
            \item If the contribution is a new model (e.g., a large language model), then there should either be a way to access this model for reproducing the results or a way to reproduce the model (e.g., with an open-source dataset or instructions for how to construct the dataset).
            \item We recognize that reproducibility may be tricky in some cases, in which case authors are welcome to describe the particular way they provide for reproducibility. In the case of closed-source models, it may be that access to the model is limited in some way (e.g., to registered users), but it should be possible for other researchers to have some path to reproducing or verifying the results.
        \end{enumerate}
    \end{itemize}

\item {\bf Open access to data and code}
    \item[] Question: Does the paper provide open access to the data and code, with sufficient instructions to faithfully reproduce the main experimental results, as described in supplemental material?
    \item[] Answer: \answerYes{}
    \item[] Justification: We have provided a URL to our code in the abstract.
    \item[] Guidelines:
    \begin{itemize}
        \item The answer NA means that paper does not include experiments requiring code.
        \item Please see the NeurIPS code and data submission guidelines (\url{https://nips.cc/public/guides/CodeSubmissionPolicy}) for more details.
        \item While we encourage the release of code and data, we understand that this might not be possible, so “No” is an acceptable answer. Papers cannot be rejected simply for not including code, unless this is central to the contribution (e.g., for a new open-source benchmark).
        \item The instructions should contain the exact command and environment needed to run to reproduce the results. See the NeurIPS code and data submission guidelines (\url{https://nips.cc/public/guides/CodeSubmissionPolicy}) for more details.
        \item The authors should provide instructions on data access and preparation, including how to access the raw data, preprocessed data, intermediate data, and generated data, etc.
        \item The authors should provide scripts to reproduce all experimental results for the new proposed method and baselines. If only a subset of experiments are reproducible, they should state which ones are omitted from the script and why.
        \item At submission time, to preserve anonymity, the authors should release anonymized versions (if applicable).
        \item Providing as much information as possible in supplemental material (appended to the paper) is recommended, but including URLs to data and code is permitted.
    \end{itemize}

\item {\bf Experimental Setting/Details}
    \item[] Question: Does the paper specify all the training and test details (e.g., data splits, hyperparameters, how they were chosen, type of optimizer, etc.) necessary to understand the results?
    \item[] Answer: \answerYes{}
    \item[] Justification: See Appendix~\ref{app:details}.
    \item[] Guidelines: 
    \begin{itemize}
        \item The answer NA means that the paper does not include experiments.
        \item The experimental setting should be presented in the core of the paper to a level of detail that is necessary to appreciate the results and make sense of them.
        \item The full details can be provided either with the code, in appendix, or as supplemental material.
    \end{itemize}

\item {\bf Experiment Statistical Significance}
    \item[] Question: Does the paper report error bars suitably and correctly defined or other appropriate information about the statistical significance of the experiments?
    \item[] Answer: \answerNo{}
    \item[] Justification: The differences are often significantly large and the exact performance is not the primary concern of this work.
    \item[] Guidelines:
    \begin{itemize}
        \item The answer NA means that the paper does not include experiments.
        \item The authors should answer "Yes" if the results are accompanied by error bars, confidence intervals, or statistical significance tests, at least for the experiments that support the main claims of the paper.
        \item The factors of variability that the error bars are capturing should be clearly stated (for example, train/test split, initialization, random drawing of some parameter, or overall run with given experimental conditions).
        \item The method for calculating the error bars should be explained (closed form formula, call to a library function, bootstrap, etc.)
        \item The assumptions made should be given (e.g., Normally distributed errors).
        \item It should be clear whether the error bar is the standard deviation or the standard error of the mean.
        \item It is OK to report 1-sigma error bars, but one should state it. The authors should preferably report a 2-sigma error bar than state that they have a 96\% CI, if the hypothesis of Normality of errors is not verified.
        \item For asymmetric distributions, the authors should be careful not to show in tables or figures symmetric error bars that would yield results that are out of range (e.g. negative error rates).
        \item If error bars are reported in tables or plots, The authors should explain in the text how they were calculated and reference the corresponding figures or tables in the text.
    \end{itemize}

\item {\bf Experiments Compute Resources}
    \item[] Question: For each experiment, does the paper provide sufficient information on the computer resources (type of compute workers, memory, time of execution) needed to reproduce the experiments?
    \item[] Answer: \answerYes{}
    \item[] Justification: See Appendix~\ref{app:details}.
    \item[] Guidelines:
    \begin{itemize}
        \item The answer NA means that the paper does not include experiments.
        \item The paper should indicate the type of compute workers CPU or GPU, internal cluster, or cloud provider, including relevant memory and storage.
        \item The paper should provide the amount of compute required for each of the individual experimental runs as well as estimate the total compute. 
        \item The paper should disclose whether the full research project required more compute than the experiments reported in the paper (e.g., preliminary or failed experiments that didn't make it into the paper). 
    \end{itemize}
    
\item {\bf Code Of Ethics}
    \item[] Question: Does the research conducted in the paper conform, in every respect, with the NeurIPS Code of Ethics \url{https://neurips.cc/public/EthicsGuidelines}?
    \item[] Answer: \answerYes{}
    \item[] Justification: We obey all aspects of the Code of Ethics.
    \item[] Guidelines:
    \begin{itemize}
        \item The answer NA means that the authors have not reviewed the NeurIPS Code of Ethics.
        \item If the authors answer No, they should explain the special circumstances that require a deviation from the Code of Ethics.
        \item The authors should make sure to preserve anonymity (e.g., if there is a special consideration due to laws or regulations in their jurisdiction).
    \end{itemize}

\item {\bf Broader Impacts}
    \item[] Question: Does the paper discuss both potential positive societal impacts and negative societal impacts of the work performed?
    \item[] Answer: \answerNA{}
    \item[] Justification: This is a theory-oriented paper with no societal impact.
    \item[] Guidelines:
    \begin{itemize}
        \item The answer NA means that there is no societal impact of the work performed.
        \item If the authors answer NA or No, they should explain why their work has no societal impact or why the paper does not address societal impact.
        \item Examples of negative societal impacts include potential malicious or unintended uses (e.g., disinformation, generating fake profiles, surveillance), fairness considerations (e.g., deployment of technologies that could make decisions that unfairly impact specific groups), privacy considerations, and security considerations.
        \item The conference expects that many papers will be foundational research and not tied to particular applications, let alone deployments. However, if there is a direct path to any negative applications, the authors should point it out. For example, it is legitimate to point out that an improvement in the quality of generative models could be used to generate deepfakes for disinformation. On the other hand, it is not needed to point out that a generic algorithm for optimizing neural networks could enable people to train models that generate Deepfakes faster.
        \item The authors should consider possible harms that could arise when the technology is being used as intended and functioning correctly, harms that could arise when the technology is being used as intended but gives incorrect results, and harms following from (intentional or unintentional) misuse of the technology.
        \item If there are negative societal impacts, the authors could also discuss possible mitigation strategies (e.g., gated release of models, providing defenses in addition to attacks, mechanisms for monitoring misuse, mechanisms to monitor how a system learns from feedback over time, improving the efficiency and accessibility of ML).
    \end{itemize}
    
\item {\bf Safeguards}
    \item[] Question: Does the paper describe safeguards that have been put in place for responsible release of data or models that have a high risk for misuse (e.g., pretrained language models, image generators, or scraped datasets)?
    \item[] Answer: \answerNA{}
    \item[] Justification: This paper does not involve such models.
    \item[] Guidelines:
    \begin{itemize}
        \item The answer NA means that the paper poses no such risks.
        \item Released models that have a high risk for misuse or dual-use should be released with necessary safeguards to allow for controlled use of the model, for example by requiring that users adhere to usage guidelines or restrictions to access the model or implementing safety filters. 
        \item Datasets that have been scraped from the Internet could pose safety risks. The authors should describe how they avoided releasing unsafe images.
        \item We recognize that providing effective safeguards is challenging, and many papers do not require this, but we encourage authors to take this into account and make a best faith effort.
    \end{itemize}

\item {\bf Licenses for existing assets}
    \item[] Question: Are the creators or original owners of assets (e.g., code, data, models), used in the paper, properly credited and are the license and terms of use explicitly mentioned and properly respected?
    \item[] Answer: \answerYes{}
    \item[] Justification: We cite the authors of the models and the dataset.
    \item[] Guidelines:
    \begin{itemize}
        \item The answer NA means that the paper does not use existing assets.
        \item The authors should cite the original paper that produced the code package or dataset.
        \item The authors should state which version of the asset is used and, if possible, include a URL.
        \item The name of the license (e.g., CC-BY 4.0) should be included for each asset.
        \item For scraped data from a particular source (e.g., website), the copyright and terms of service of that source should be provided.
        \item If assets are released, the license, copyright information, and terms of use in the package should be provided. For popular datasets, \url{paperswithcode.com/datasets} has curated licenses for some datasets. Their licensing guide can help determine the license of a dataset.
        \item For existing datasets that are re-packaged, both the original license and the license of the derived asset (if it has changed) should be provided.
        \item If this information is not available online, the authors are encouraged to reach out to the asset's creators.
    \end{itemize}

\item {\bf New Assets}
    \item[] Question: Are new assets introduced in the paper well documented and is the documentation provided alongside the assets?
    \item[] Answer: \answerNA{}
    \item[] Justification: We introduce no new assets.
    \item[] Guidelines:
    \begin{itemize}
        \item The answer NA means that the paper does not release new assets.
        \item Researchers should communicate the details of the dataset/code/model as part of their submissions via structured templates. This includes details about training, license, limitations, etc. 
        \item The paper should discuss whether and how consent was obtained from people whose asset is used.
        \item At submission time, remember to anonymize your assets (if applicable). You can either create an anonymized URL or include an anonymized zip file.
    \end{itemize}

\item {\bf Crowdsourcing and Research with Human Subjects}
    \item[] Question: For crowdsourcing experiments and research with human subjects, does the paper include the full text of instructions given to participants and screenshots, if applicable, as well as details about compensation (if any)? 
    \item[] Answer: \answerNA{}
    \item[] Justification: We do not use crowdsourcing.
    \item[] Guidelines:
    \begin{itemize}
        \item The answer NA means that the paper does not involve crowdsourcing nor research with human subjects.
        \item Including this information in the supplemental material is fine, but if the main contribution of the paper involves human subjects, then as much detail as possible should be included in the main paper. 
        \item According to the NeurIPS Code of Ethics, workers involved in data collection, curation, or other labor should be paid at least the minimum wage in the country of the data collector. 
    \end{itemize}

\item {\bf Institutional Review Board (IRB) Approvals or Equivalent for Research with Human Subjects}
    \item[] Question: Does the paper describe potential risks incurred by study participants, whether such risks were disclosed to the subjects, and whether Institutional Review Board (IRB) approvals (or an equivalent approval/review based on the requirements of your country or institution) were obtained?
    \item[] Answer: \answerNA{}
    \item[] Justification: We do not have such studies.
    \item[] Guidelines:
    \begin{itemize}
        \item The answer NA means that the paper does not involve crowdsourcing nor research with human subjects.
        \item Depending on the country in which research is conducted, IRB approval (or equivalent) may be required for any human subjects research. If you obtained IRB approval, you should clearly state this in the paper. 
        \item We recognize that the procedures for this may vary significantly between institutions and locations, and we expect authors to adhere to the NeurIPS Code of Ethics and the guidelines for their institution. 
        \item For initial submissions, do not include any information that would break anonymity (if applicable), such as the institution conducting the review.
    \end{itemize}

\end{enumerate}

\end{document}